\newtheorem{theorem}{Theorem}[section]
\newtheorem{corollary}{Corollary}[theorem]
\newtheorem{lemma}[theorem]{Lemma}
\DeclarePairedDelimiterX{\norm}[1]{\lVert}{\rVert}{#1}
\title{Gradient-based Competitive Learning: Theory}
\author{
 Giansalvo Cirrincione*\hspace{1mm}\href{https://orcid.org/0000-0002-2894-4164}{\includegraphics[scale=0.06]{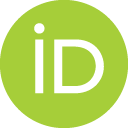}}\\
  University of Picardie Jules Verne\\
  France\\
  University of South Pacific\\
  Fiji
  \And
  Pietro~Barbiero*\hspace{1mm}\href{https://orcid.org/0000-0003-3155-2564}{\includegraphics[scale=0.06]{orcid.png}}\\
  Cambridge University\\
  United Kingdom\\
  \texttt{barbiero@tutanota.com} \\
  %% examples of more authors
   \And
 Gabriele Ciravegna\hspace{1mm}\href{https://orcid.org/0000-0002-6799-1043}{\includegraphics[scale=0.06]{orcid.png}}\\
  University of Florence\\
  Italy\\
   \And
 Vincenzo Randazzo\hspace{1mm}\href{https://orcid.org/0000-0003-3640-8561}{\includegraphics[scale=0.06]{orcid.png}}\\
  Politecnico di Torino\\
  Italy\\
}
\begin{document}
\maketitle

{\let\thefootnote\relax\footnote{*Equal contribution.}}
{\let\thefootnote\relax\footnote{
P. Barbiero conceived the main idea and the dual method. G. Cirrincione developed the theory and validated theorems, assumptions, and proofs with P. Barbiero. G. Ciravegna and P. Barbiero conceived, planned, and carried out the experiments. All authors discussed the results and contributed to the final manuscript.}}

\begin{abstract}
Deep learning has been widely used for supervised learning and classification/regression problems. Recently, a novel area of research has applied this paradigm to unsupervised tasks; indeed, a gradient-based approach extracts, efficiently and autonomously, the relevant features for handling input data. However, state-of-the-art techniques focus mostly on algorithmic efficiency and accuracy rather than mimic the input manifold. On the contrary, competitive learning is a powerful tool for replicating the input distribution topology. This paper introduces a novel perspective in this area by combining these two techniques: unsupervised gradient-based and competitive learning.
The theory is based on the intuition that neural networks are able to learn topological structures by working directly on the transpose of the input matrix.
At this purpose, the vanilla competitive layer and its dual are presented. The former is just an adaptation of a standard competitive layer for deep clustering, while the latter is trained on the transposed matrix.
Their equivalence is extensively proven both theoretically and experimentally. 
However, the dual layer is better suited for handling very high-dimensional datasets.  
The proposed approach has a great potential as it can be generalized to a vast selection of topological learning tasks, such as non-stationary and hierarchical clustering; furthermore, it can also be integrated within more complex architectures such as autoencoders and generative adversarial networks.
\end{abstract}

% keywords can be removed
\keywords{Clustering \and CHL \and Competitive Learning \and Deep Learning \and Duality \and Gradient-based clustering \and Linear network \and Prototype \and Theory \and Topology \and Unsupervised}

% TODO Gabriele, riscrivere introduzione cambiando in particolare la prima parte:
% \begin{itemize}
%     \item Clustering come minimizzazione di una funzione di errore - errore di quantizzazione
%     \item k-means come tecnica di minimizzazione dell'errore di quantizzazione
%     \item Limiti del k-means (parametro k e minimi locali)
%     \item Competetive learning (HCL) come adaptive k-means (NG, GNG, SOIN, G-EXIN)
%     \item Deep Clustering lavorare nello spazio di latenza, trovare le feature che ci permettono di fare clustering a una dimensionalità accettabile, esempi, riprendo molto di quello già scritto
%     \item Confrontarmi con Vincenzo 
%     % \item Topological clustering refers to a class of techniques where cluster centroids are connected during the learning process such that a Delaunay triangulation of the data manifold is induced. One of the most common approaches employs CHL at this purpose: If two prototypes are the two closest centroids for the same sample, an edge is created between them, representing their mutual relationship outlined by the common neighborhood.
% \end{itemize}

\section{Clustering as a learning problem}
Machine learning can be generally referred as extracting information from noisy data. Depending on the paradigm, either unsupervised or supervised, this problem is called clustering or classification, respectively. Both groups of techniques can be seen as an optimization problem where a loss function is minimized. The oldest and most famous clustering technique is k-means \cite{macqueen1967some}, which iteratively adapts cluster centroid positions in order to minimize the quantization error.
This technique has been extensively used and studied to uncover unknown relations in unsupervised problems. However, its main drawback is the definition of the number of cluster centroids (\textit{k}) beforehand. This is the same issue as other famous techniques such as Gaussian Mixture Models (GMM) \cite{mclachlan1988mixture} and Neural Gas (NG) \cite{martinetz1991neural}. To overcome this limitation, several \textit{incremental} algorithms have been proposed in literature, where the number of neurons is not fixed but changes over time w.r.t the complexity of the problem at hand. This approach adds a novel unit whether certain conditions are met, e.g. the quantization error is too high or data is too far from the existing neurons; in this sense, the new unit should yield a better quantization of the input distribution. 
Some examples are the the adapative k-means \cite{bhatia2004adaptive} and the Density Based Spatial Clustering (DBSCAN) \cite{ester1996density}. 
Furthermore, unsupervised learning is generally capable of finding groups of samples that are similar under a specific metric, e.g. Euclidean distance. However, it cannot infer the underlying data topology. At this purpose, to define a local topology, the \textit{Competitive Hebbian Learning (CHL)} paradigm \cite{hebb2005organization,martinetz1993competitive,chl} is employed by some algorithms such as Self-Organizing-Map (SOM) by Kohonen \cite{kohonen1982self}, the Growing Neural Gas (GNG) \cite{fritzke1995growing} and its variants \cite{fritzke1997self,ghng,barbiero2017neural,cirrincione2020gh}. Indeed, given an input sample, the two closest neurons, called first and second winners, are linked by an edge, which locally models the input shape.  

All the previously cited techniques suffer from the curse of dimensionality \cite{altman2018curse}. 
Distance-based similarity measures are not effective when dealing with highly dimensional data (e.g. images or gene expression) \cite{barbiero2020modeling}. Therefore, many methods to reduce input dimensionality and to select the most important features have been employed, such as Principal Component Analysis (PCA) \cite{pearson1901liii} and kernel functions \cite{scholkopf1997kernel}. To better preserve local topology in the reduced space, the Curvilinear Component Analysis (CCA) \cite{demartines1997curvilinear} and its online incremental version, the GCCA \cite{cirrincione2018growing,cirrincione2018growing2}, proposed a non-linear projection algorithm. This approach is quite useful for noise removal and when input features are highly correlated, because projection reduces the problem complexity; on the contrary, when features are statistically independent, a smaller space implies worse clustering performance due to the information loss.
An alternative way for dealing with high dimensional data is the use of Deep Neural Networks (DNN). Indeed, Convolutional Neural Networks (CNN) \cite{lecun1989backpropagation} have proven to be a valid tool for handling high dimensional input distribution in case of supervised learning. The strength of CNNs relies on the convolutional filters, whose output is linearly separable. In this sense, CNN filters can also be exploited for clustering. Also, DNNs can be trained by optimizing a clustering loss function \cite{hu2017learning}, \cite{yang2016joint}, \cite{chang2017deep}. A straightforward approach, however, may lead to overfitting, where data are mapped to compact clusters that do not correspond to data topology. To overcome this problem, weight regularization, data augmentation and supervised network pre-training have been proposed \cite{min2018survey}. The latter technique exploits a pre-trained CNN (e.g. AlexNet on ImageNet \cite{krizhevsky2012imagenet}) as a feature extractor in a transfer learning way \cite{hsu2017cnn}. 
Otherwise, clustering learning procedures may be integrated with a network learning process, which allows employing more complex architectures such as  Autoencoders (AE) \cite{kramer1991nonlinear}, Variational-Autoencoders (VAE) \cite{kingma2013auto} or Generative Adversarial Networks (GAN) \cite{goodfellow2014generative}. Such techniques usually employ a two-step learning process: first, a good representation of the input space is learnt through a network loss function and, later, the quantization is fine-tuned by optimizing a clustering-specific loss. The network loss can be either the reconstruction loss of an AE, the variational loss of a VAE or the adversarial loss of a GAN.  

To our knowledge, no previous work suggested to join DNN feature transformation skill with the higher representation capabilities of competitive learning approaches. In this paper, we propose two variants of a neural architecture where competitive learning is embedded in the training loss function. These networks can also be placed on top of more complex models, such as AE, CNN, VAE or GAN.

\section{Gradient-based competitive learning} \label{sec:methods}

\begin{figure*}[!ht]
    \centering
    \includegraphics[trim=0 30 0 70, clip, width=0.9\columnwidth]{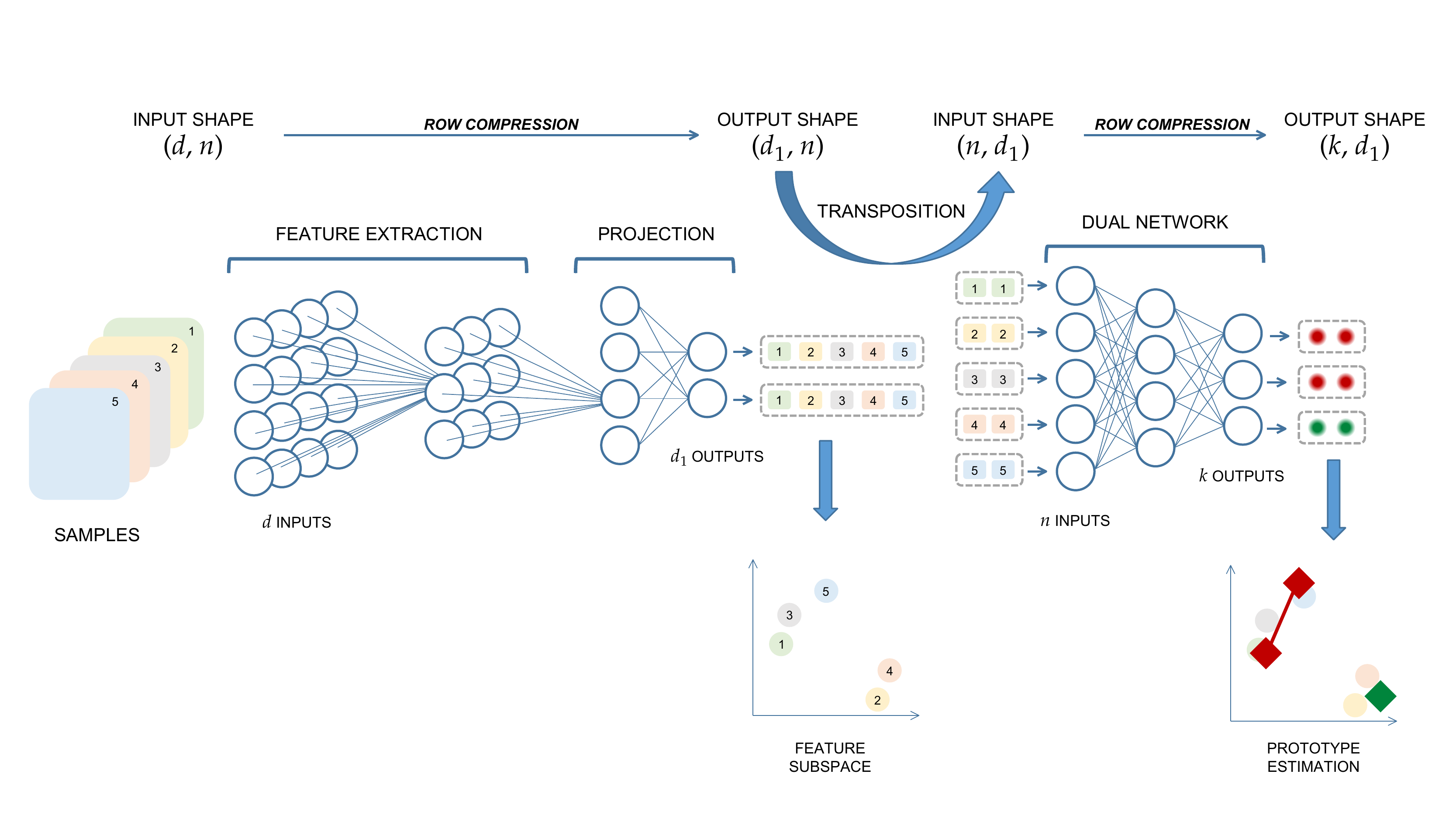}
    \caption{Representation of a deep architecture where a dual competitive network is used to estimate cluster centroids. The first network executes a feature extraction and then maps training observations into a $d_1$-dimensional feature subspace. This output is transposed and used to feed the dual network to estimate prototype positions.}
    \label{fig:deep-dual}
\end{figure*}

\subsection{Dual neural networks}
Multi-layer feedforward neural networks are universal function approximators \cite{hornik1989multilayer}. Given an input matrix $X \in \mathbb{R}^{d \times n}$ containing a collection of $n$ observations and a set of $k$ supervisions $Y \in \mathbb{R}^{k \times n}$, a neural network with $d$ input and $k$ output units can be used to approximate the target features $Y$.
The relationship between $X$ and $Y$ can be arbitrarily complex, nonetheless deep neural networks can optimize their parameters in such a way that their predictions $\widehat{Y}$ will match the target $Y$.
In supervised settings, neural networks are used to combine the information of different features (rows of $X$) in order to provide a prediction $\widehat{Y}$, which corresponds to a nonlinear projection of the observations (columns of $X$) optimized to match the target $Y$. Hence, in such scenarios, the neural network will provide one prediction for each observation $i = 1,\dots,n$.

The objective of competitive learning consists in studying the underlying structure of a manifold by means of prototypes, i.e. a set of positions in the feature space representative of the input observations. Each prototype $p_k$ is a vector in $\mathbb{R}^d$ as it lies in the same feature space of the observations. Hence, competitive learning algorithms can be described as functions mapping an input matrix $X \in \mathbb{R}^{d \times n}$ in an output matrix $\widehat{P} \in \mathbb{R}^{d \times k}$ where the $j$-th column represents the prototype $p_j$. Vanilla competitive neural networks \cite{rumelhart1985feature,barlow1989unsupervised,haykin2007neural} are composed of a set of competing neurons described by a vector of weights $p_j$, representing the position of neurons (a.k.a. \textit{prototypes}) in the input space. The inverse of the Euclidean distance between the input data $x_i$ and the weight vector $p_j$ represents the similarity between the input and the prototype. For every input vector $x_i$, the prototypes \textit{compete} with each other to see which one is the most similar to that particular input vector. By following the Competitive Hebbian Learning (CHL) rule \cite{hebb2005organization,martinetz1993competitive}, the two closest prototypes to $x_i$ are connected using an edge, representing their mutual activation. Depending on the approach, the closest prototypes to the input sample move towards it, reducing the distance between the prototype and the input. As a result, the position of the competing neurons in the input space will tend to cluster centroids of the input data. The most natural way of using a feedforward neural network for this kind of task is the transposition of the input matrix $X$ while optimizing a prototype-based loss function. This idea leads to the dual competitive layer (DCL, see Section \ref{sec:duality} and \ref{sec:analysis}), i.e. a fully connected layer trained on $X^T$, thus having $n$ input units corresponding to observations and $k$ output units corresponding to prototypes. Instead of combining different features to generate the feature subspace $\mathbb{R}^k$ where samples will be projected as for classification or regression tasks, in this case the neural network combines different samples to generate a synthetic summary of the observations, represented by a set of prototypes.
Compared with the architecture of a vanilla competitive layer (VCL) \cite{rumelhart1985feature} where prototypes correspond to the set of weight vectors of the fully connected layer, the dual approach naturally fits in a deep learning framework as the fully connected layer is actually used to apply a transformation of the input. This is not the case in a VCL, where the output is not even computed.

The DCL outputs the prototypes after one batch (epoch) by means of a linear transformation represented by its weights. At this aim, a gradient-based minimization of a loss function is used, by using the whole batch. This reminds the centroid estimation of the generalized Lloyd algorithm (k-means, \cite{lloyd1982least,sabin1986global}), which, instead, uses only the Voronoi sets. This is an important difference, because the error information can be backpropagated to the previous layer, if any, by exploiting all the observations, thus providing a relaxation of the Voronoi constraint. The underlying DCL analysis can be found in Section \ref{sec:flows}.

In order to estimate the parameters of DCL, a loss function representing the quantization error of the prototypes, the Voronoi sets are estimated by means of the Euclidean distance matrix (edm). This is the same requirement of the second iteration of the generalized Lloyd algorithm. However, the latter uses this information for directly computing the centroids. The former, instead, only yields the error to be backpropagated. The analysis and choice of the loss function is illustrated in Section \ref{sec:analysis}.

By training on the transposed input, DCL looks at observations as features and vice versa. As a consequence, increasing the number of observations $n$ (rows of $X^T$) enhances the capacity of the network, as the number of input units corresponds to $n$. Providing a higher number of features, instead, stabilizes the learning process as it expands the set of input vectors of DCL.

After training, once prototype positions have been estimated, the dual network is no longer needed. Indeed, test observations can be evaluated finding the closest prototype for each sample. This means that the amount of information required to employ this approach in production environments corresponds just to the prototype matrix $\widehat{P}$.

\subsection{Duality theory for single-layer networks} \label{sec:duality}

The intuitions outlined in the previous section can be formalized in a general theory which considers the duality properties between a linear single-layer neural network and its dual, defined as a network which learns on the transpose of the input matrix and has the same number of output neurons.

\begin{figure}[th]
    \centering
    \includegraphics[width=0.6\columnwidth, trim={100 60 100 0}, clip]{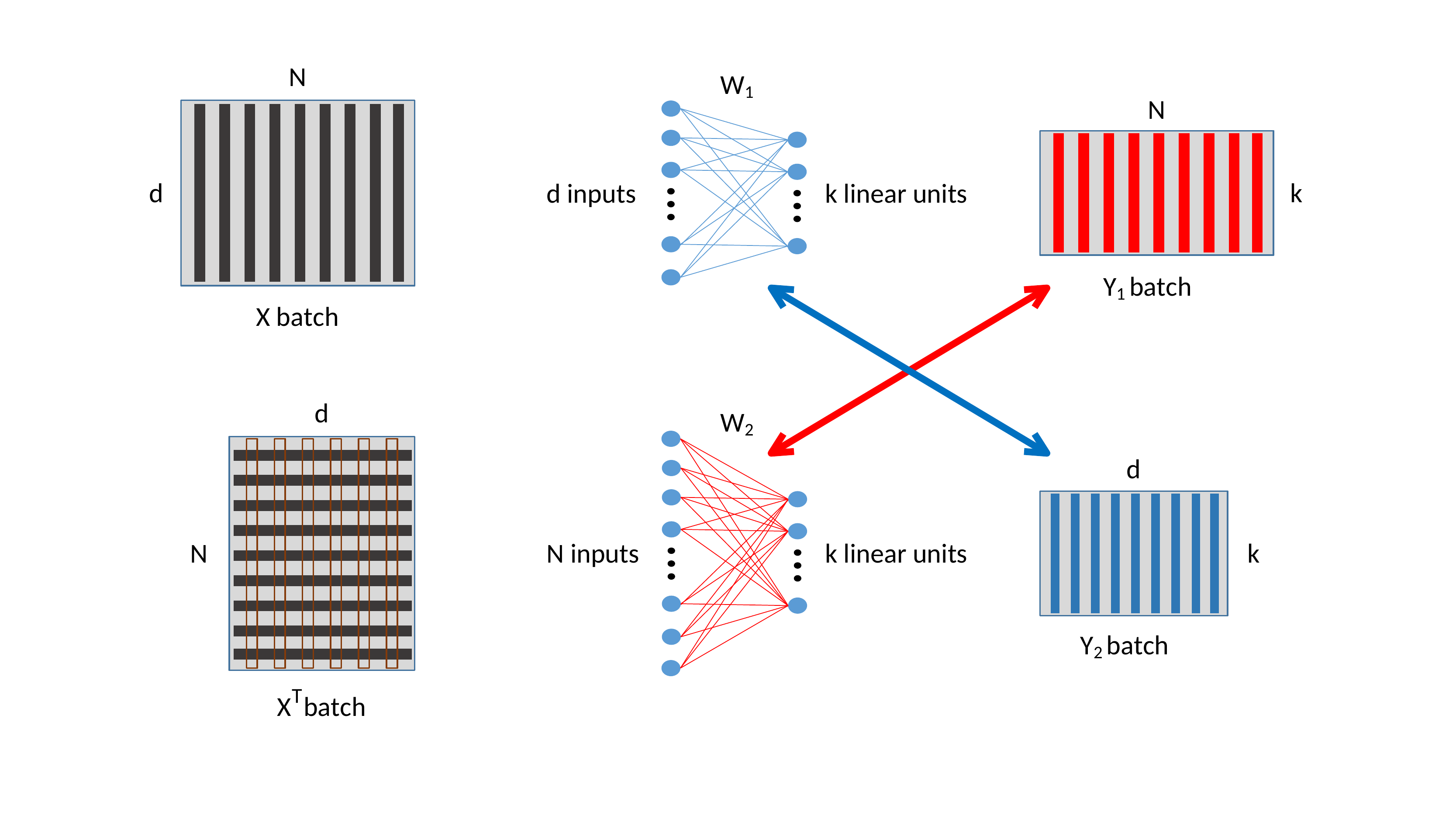}
    \caption{Base and dual single-layer neural networks.}
    \label{fig:theo-1}
\end{figure}

Consider a single layer neural network whose outputs have linear activation functions. There are $d$ input units and $k$ output units which represent a continuous signal in case of regression or class membership (posterior probabilities in case of cross entropy error function) in case of classification. A batch of $n$ samples, say $X$, is fed to the network. The weight matrix is $W_1$, where the  element $w_{ij}$ represents the weight from the input unit $j$ to the neuron $i$. 
The single layer neural network with linear activation functions in the lower scheme is here called the dual network of the former one. It has the same number of outputs and $n$ inputs. It is trained on the transpose of the original $X$ database. Its weight matrix is $W_2$ and the output batch is $Y_2$.
The following theorems state the duality conditions of the two architectures.
Figure \ref{fig:theo-1} represents the two networks and their duality.

\begin{theorem}[Network duality in competitive learning] \label{thm:duality}
Given a loss function for competitive learning based on prototypes, a single linear network (base) whose weight output neurons are the prototypes is equivalent to another (dual) whose outputs are the prototypes, under the following assumptions:
\begin{enumerate}
    \item the input matrix of the dual network is the transpose of the input matrix of the base network;
    \item the samples of the input matrix $X$ are uncorrelated with unit variance
\end{enumerate}
\end{theorem}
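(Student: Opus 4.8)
The plan is to reduce the statement to a single linear-algebra fact about the transposed data matrix: that post-composition with $X^T$ is a surjection onto the space of prototype matrices. First I would fix conventions. Write $W_1 \in \mathbb{R}^{k \times d}$ for the base weights, so the $i$-th base prototype is the $i$-th row of $W_1$ and the base prototype matrix is $P_{\mathrm{base}} = W_1$. Write $W_2 \in \mathbb{R}^{k \times n}$ for the dual weights; since the dual network is fed $X^T$ and is linear, its output is $Y_2 = W_2 X^T \in \mathbb{R}^{k \times d}$, and by hypothesis the dual prototype matrix is $P_{\mathrm{dual}} = Y_2 = W_2 X^T$. A competitive-learning loss ``based on prototypes'' is, by assumption, a function $\mathcal{L}(P; X)$ of the prototype matrix $P \in \mathbb{R}^{k \times d}$ alone (e.g.\ a quantization error read off from the Euclidean distance matrix between the columns of $X$ and the rows of $P$). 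Hence the only difference between the two networks is the parametrization of $P$: the identity $W_1 \mapsto W_1$ for the base, and the linear map $\Phi : \mathbb{R}^{k \times n} \to \mathbb{R}^{k \times d}$, $\Phi(W_2) = W_2 X^T$, for the dual.

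Next I would invoke Assumption~2. Uncorrelated samples with unit variance means the sample second-moment matrix is the identity, $\tfrac{1}{n} X X^T = I_d$ (equivalently $X X^T = n I_d$), so $X$ has full row rank $d$. Therefore $X^T$ has full column rank, $\Phi$ is surjective, and $\{\, W_2 X^T : W_2 \in \mathbb{R}^{k \times n} \,\} = \mathbb{R}^{k \times d}$. Consequently the dual network represents exactly the same prototype configurations as the base network, and the problems $\min_{W_1} \mathcal{L}(W_1; X)$ and $\min_{W_2} \mathcal{L}(W_2 X^T; X)$ share the same optimal value, with optimal prototype matrices in correspondence (modulo $\ker \Phi$, i.e.\ dual-weight components in the orthogonal complement of the row space of $X$, which leave $P$ unchanged).

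To make the correspondence explicit — which is where Assumption~2 is used a second time — I would note that for any base configuration $W_1$ the choice $W_2 := \tfrac{1}{n} W_1 X$ gives $W_2 X^T = \tfrac{1}{n} W_1 X X^T = W_1$, so the dual network outputs the identical prototype matrix and hence the identical loss; conversely, given $W_2$ one takes $W_1 := W_2 X^T$. This is an explicit loss-preserving map between the two parameter spaces, the clean pseudoinverse $X^{+} = \tfrac{1}{n} X^T$ being precisely the content of Assumption~2; combined with the surjectivity above, it yields the asserted equivalence.

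I expect the main difficulty to be conceptual rather than computational: one must argue carefully that the competitive loss genuinely factors through the $\mathbb{R}^{k \times d}$ prototype matrix for \emph{both} architectures — so that they optimize one functional pre-composed with two different linear reparametrizations — and one must keep the transpose conventions straight so the dimensions line up ($W_1$ versus $W_1^T$, rows versus columns of $P$, $X$ versus $X^T$). A careful reader may also observe that mere full row rank of $X$ already suffices for equivalence of the optima; the full strength of Assumption~2 is what turns the change of variables into an isometry-type map, which is what is needed when this duality is later exploited to transport the gradient flows of the two networks onto one another in Sections~\ref{sec:flows}--\ref{sec:analysis}.
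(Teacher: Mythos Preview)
Your proof is correct and follows essentially the same route as the paper: both hinge on the identity $XX^T \propto I_d$ (the paper writes $XX^T = I_d$, you normalize as $\tfrac{1}{n}XX^T = I_d$) to show that any base prototype matrix $W_1$ arises as the dual output $W_2 X^T$, with the explicit choice $W_2 = W_1 X$ (paper) resp.\ $W_2 = \tfrac{1}{n} W_1 X$ (you). Your write-up is more careful --- making explicit that the loss factors through the prototype matrix, the surjectivity of $\Phi$, the role of $\ker\Phi$, and the observation that full row rank of $X$ alone already suffices for equivalence of optima --- but the underlying argument is the same one-line linear algebra.
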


\begin{proof}
Consider a loss function based on prototypes, whose minimization is required for competitive learning.
From the assumption on the inputs (rows of the matrix $X$), it results $X X^T=I_d$. A single layer linear network is represented by the matrix formula:
\begin{equation}
    Y = W X = \Big[ \textrm{prototype}_1 \dots \textrm{prototype}_k \Big] X
\end{equation}
By multiplying on the right by $X^T$, it holds:
\begin{equation}
    W X X^T = Y X^T
\end{equation}
Under the second assumption:
\begin{equation}
    W = \Big[ \textrm{prototype}_1 \dots \textrm{prototype}_k \Big] = Y X^T
\end{equation}
This equation represents a (dual) linear network whose outputs are the prototypes $W$. Considering that the same loss function is used for both cases, the two networks are equivalent.
\end{proof}

This theorem directly applies to the VCL (base) and DCL (dual) neural networks if the assumption $2$ holds for the training set. If not, a preprocessing, e.g. batch normalization, can be performed.

\begin{theorem}[Impossible complete duality]
Two dual networks cannot share weights as  $W_1=Y_2$ and  $W_2=Y_1$ (complete dual constraint), except if the samples of the input matrix $X$ are uncorrelated with unit variance.
\end{theorem}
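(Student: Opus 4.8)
The plan is to eliminate the two network outputs from the complete dual constraint and reduce it to a linear identity that $X$ alone must satisfy. Writing the base network as $Y_1 = W_1 X$ and the dual network as $Y_2 = W_2 X^T$, the constraint $W_2 = Y_1$ gives $W_2 = W_1 X$, and substituting this into $W_1 = Y_2 = W_2 X^T$ yields $W_1 = W_1 X X^T$, i.e. $W_1(I_d - X X^T) = 0$. Substituting the other way round gives the symmetric identity $W_2(I_n - X^T X) = 0$. So the complete dual constraint is equivalent to these two orthogonality conditions together with $W_2 = W_1 X$, and the whole argument reduces to analysing $W_1(I_d - X X^T)=0$.

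From $W_1(I_d - X X^T) = 0$ the conclusion is immediate whenever $W_1$ has full column rank — in particular whenever the number of prototypes satisfies $k \ge d$ and the trained weights are not degenerate, which is the generic situation after an unconstrained minimization: then $I_d - X X^T$ must vanish, which is exactly assumption~2 of Theorem~\ref{thm:duality}, namely that the samples of $X$ are uncorrelated with unit variance. I expect this step to be the main obstacle to state cleanly, because the claim of literal impossibility is slightly too strong: when $k < d$ the matrix $W_1$ is necessarily rank-deficient, and one can always place its row space inside the orthogonal complement of the row space of $I_d - X X^T$, so the constraint can hold for special $W_1$ even when $X X^T \neq I_d$. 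The set of such $W_1$ is a proper linear subspace of the weight space, hence of measure zero; the honest formulation of the theorem is therefore that complete duality is impossible except on a measure-zero (non-generic) set of weight configurations, unless assumption~2 holds, and the proof should make this nondegeneracy hypothesis explicit.

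Finally I would check that the exception is not vacuous, i.e. that assumption~2 genuinely permits complete duality. If $X X^T = I_d$, then, exactly as in the proof of Theorem~\ref{thm:duality}, setting the dual weights to the base output $W_2 = Y_1 = W_1 X$ gives $Y_2 = W_2 X^T = W_1 X X^T = W_1$, so $W_1 = Y_2$ and $W_2 = Y_1$ hold simultaneously. The remaining identity $W_2(I_n - X^T X)=0$ is then automatic: $X X^T = I_d$ forces $(X^T X)^2 = X^T (X X^T) X = X^T X$, so $X^T X$ is the orthogonal projector onto the row space of $X$, and the rows of $W_2 = W_1 X$ already lie in that row space and are hence fixed by it. Thus the two conditions of the complete dual constraint are mutually consistent precisely when the input samples are uncorrelated with unit variance, which is the content of the theorem.
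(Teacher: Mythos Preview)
Your argument is essentially the paper's own: substitute $Y_1=W_1X$ and $Y_2=W_2X^T$ into the two dual constraints to obtain $W_1=W_1XX^T$ and $W_2=W_2X^TX$, and then read off the conditions $XX^T=I_d$, $X^TX=I_n$. You are in fact more careful than the paper, which cancels $W_1$ and $W_2$ without comment; your nondegeneracy caveat (full column rank of $W_1$) and your verification that the exception clause is non-vacuous are refinements the original proof omits.
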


\begin{proof}
From the duality of networks and their linearity, for an entire batch it follows:
\begin{eqnarray}
    \begin{cases}
    Y_1 &= W_1 X \\
    Y_2 &= W_2 X^T
    \end{cases}
    &\implies& W_1 = Y_1 X^T \nonumber \\ 
    &\implies& W_1 = W_1 X X^T \nonumber \\ 
    &\implies& X X^T = I_d
\end{eqnarray}
\begin{eqnarray}
    \begin{cases}
    Y_1 &= W_1 X \\
    Y_2 &= W_2 X^T
    \end{cases}
    &\implies& W_2 = Y_2 X^T \nonumber \\ 
    &\implies& W_2 = W_2 X^T X \nonumber \\ 
    &\implies& X^T X = I_n
\end{eqnarray}
where $I_d$ and $I_n$ are the identity matrices of size $d$ and $n$, respectively. These two final conditions are only possible if the samples of the input matrix $X$ are uncorrelated with unit variance, which is not the case in (almost all) machine learning applications.
\end{proof}

\begin{theorem}[Half duality I]
Given two dual networks, if the samples of the input matrix $X$ are uncorrelated with unit variance and if  $W_1=Y_2$ (first dual constraint), then  $W_2=Y_1$ (second dual constraint).
\end{theorem}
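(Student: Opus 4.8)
The plan is to treat this as the elementary linear-algebra identity it is, using only the defining relations of the two networks together with the orthonormality hypothesis. Recall that over a whole batch the base network computes $Y_1 = W_1 X$ and its dual computes $Y_2 = W_2 X^T$. I would first unpack the hypothesis ``the samples of $X$ are uncorrelated with unit variance'' in the form $X^T X = I_n$, i.e. the $n$ observation-columns of $X$ form an orthonormal system; this is the incarnation of the assumption that does the work in this direction, whereas the companion identity $X X^T = I_d$ invoked in the proof of Theorem~\ref{thm:duality} is what one would use for the reverse implication.

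Given that, the computation is immediate. From the first dual constraint together with the dual equation, $W_1 = Y_2 = W_2 X^T$. Right-multiplying both sides by $X$ and using $X^T X = I_n$ collapses the right-hand side to $W_2$, while the left-hand side is by definition $Y_1 = W_1 X$. Hence $W_2 = Y_1$, which is precisely the second dual constraint.

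The step I would be most careful about is the passage $W_1 = W_2 X^T \Rightarrow W_2 = W_1 X$: since $X^T \in \mathbb{R}^{n \times d}$ is generally not invertible, one cannot simply cancel it, and it is exactly the hypothesis $X^T X = I_n$ that lets right-multiplication by $X$ serve as a right inverse of $X^T$ here. It is worth flagging that the weaker-looking form $X X^T = I_d$ alone would not suffice: under that assumption $X^T X$ is merely the orthogonal projection onto the row space of $X$ (symmetric, idempotent, of rank $d$), so $W_2 X^T X = W_2$ can fail unless $d = n$ --- which is precisely the borderline at which, by the ``impossible complete duality'' theorem, both dual constraints can hold simultaneously, a reassuring consistency check.
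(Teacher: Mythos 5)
Your proof is correct and follows essentially the same route as the paper's: substitute the first dual constraint into $Y_1 = W_1 X$ to get $Y_1 = W_2 X^T X$ and collapse with $X^T X = I_n$. Your added remark that it is specifically $X^T X = I_n$ (and not $X X^T = I_d$) that does the work here matches the paper's appeal to "the second assumption on $X^T$" and is a worthwhile clarification, but the argument itself is the same.
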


\begin{proof}
From the first dual constraint (see Figure \ref{fig:theo-3}, top), for the second network it stems:
\begin{equation}
    Y_2 = W_1 = W_2 X^T
\end{equation}
Hence:
\begin{equation}
    Y_1 = W_1 X \implies Y_1 = W_2 X^T X
\end{equation}
under the second assumption on $X^T$ from Theorem \ref{thm:duality}, which implies $X^T X = I_n$, the result follows (see Figure \ref{fig:theo-3}, bottom).
\end{proof}

\begin{figure}[!ht]
    \centering
    \includegraphics[width=0.49\columnwidth,trim={8.2cm 1cm 6cm 3cm},clip]{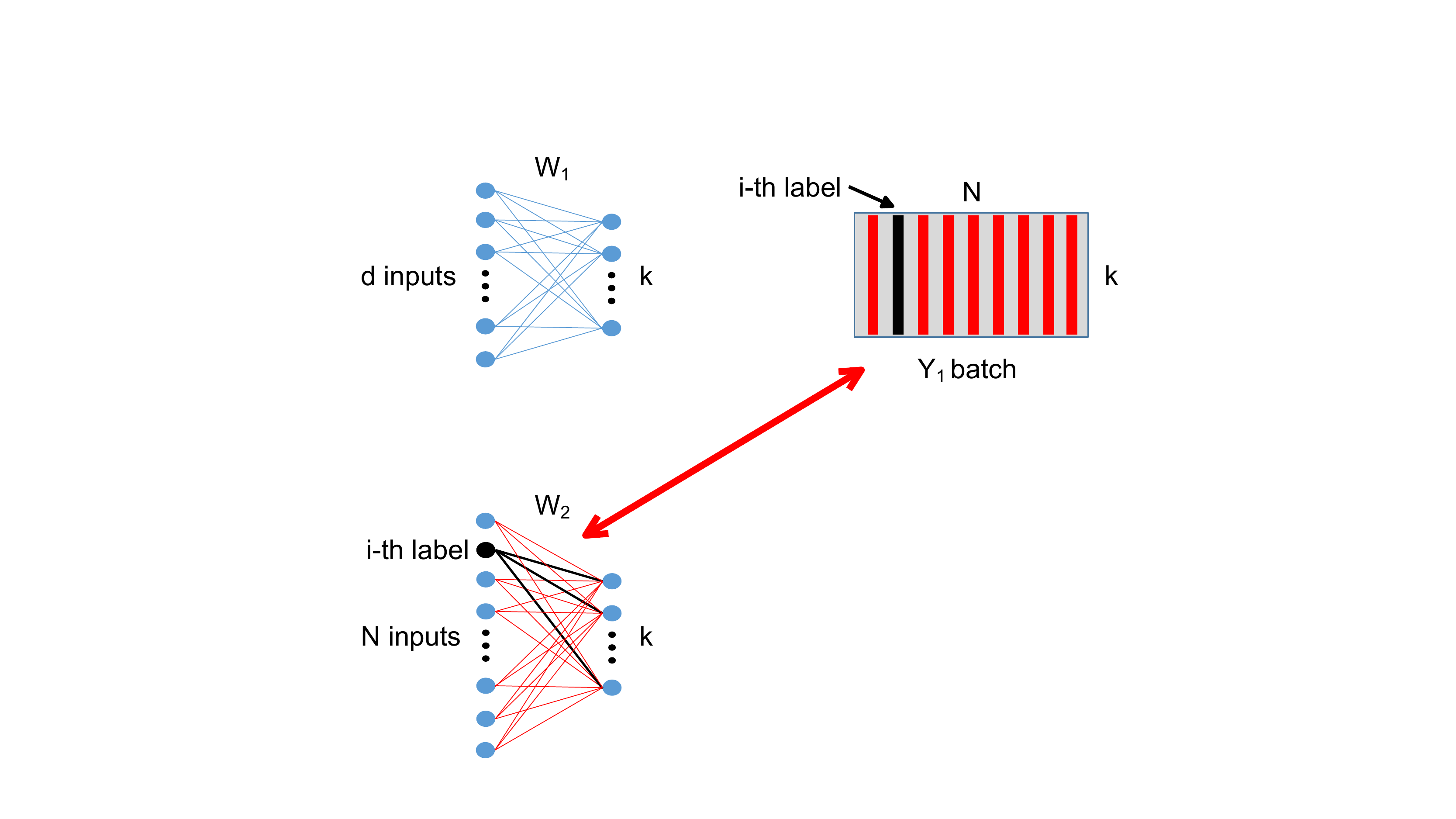}
    \includegraphics[width=0.49\columnwidth,trim={9cm 3cm 5cm 1.5cm},clip]{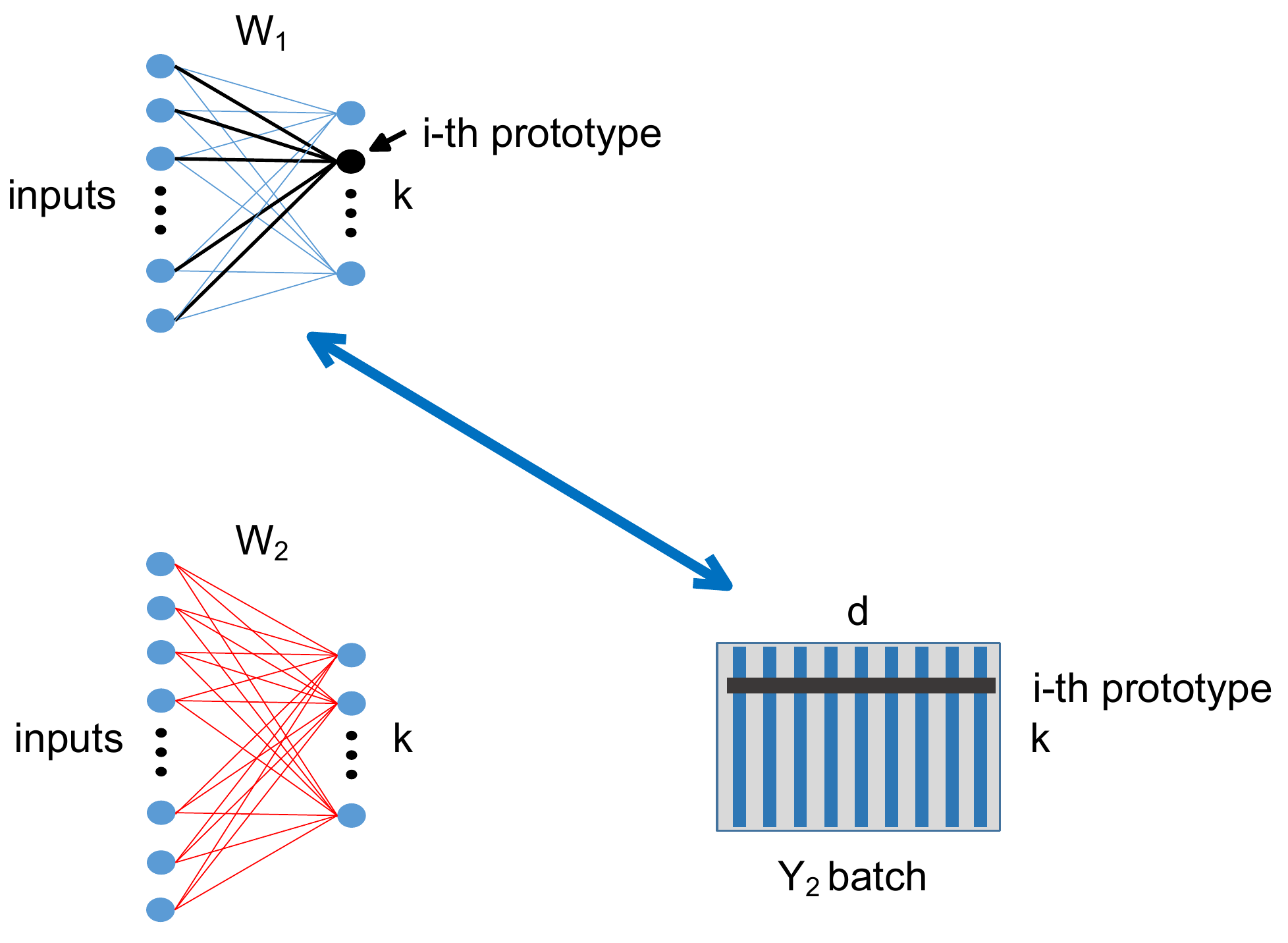}
    \caption{Half dualities.}
    \label{fig:theo-3}
\end{figure}

\begin{theorem}[Half duality II] \label{theo:HD2}
Given two dual networks, if the samples of the input matrix $X$ are uncorrelated with unit variance and if  $W_2=Y_1$ (second dual constraint), then  $W_1=Y_2$ (first dual constraint).
\end{theorem}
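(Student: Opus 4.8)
The plan is to mirror the computation used for Half duality I, interchanging the roles of the base and the dual network. I would start from the two defining linear relations of the paired networks, $Y_1 = W_1 X$ and $Y_2 = W_2 X^T$, and record the consequence of the hypothesis on the data: since the samples (the rows of $X$) are uncorrelated with unit variance, $X X^T = I_d$. This is the orthogonality identity I expect to need here, in contrast with the identity $X^T X = I_n$ that was invoked in Half duality I.

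Next I would inject the second dual constraint $W_2 = Y_1$ into the base equation, which gives $W_2 = W_1 X$. Substituting this into the output equation of the dual network yields $Y_2 = W_2 X^T = W_1 X X^T$. Applying $X X^T = I_d$ collapses the right-hand side, leaving $Y_2 = W_1$; since $W_1 = Y_2$ is exactly the first dual constraint, the proof is complete. Geometrically this is the same situation depicted in Figure \ref{fig:theo-3}, traversed in the opposite direction.

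There is no serious obstacle here: the argument is a single line of matrix algebra, perfectly symmetric to the previous theorem. The only point deserving attention is the bookkeeping of which orthogonality relation is legitimately available — the assumption literally concerns the rows of $X$ and delivers $X X^T = I_d$, which is precisely the factor that multiplies correctly on the right of $W_1 X$; one should resist invoking $X^T X = I_n$ in its place, since that would amount to reading the assumption as a statement about the columns of $X$ instead.
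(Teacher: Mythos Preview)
Your proposal is correct and follows essentially the same route as the paper: substitute the second dual constraint $W_2 = Y_1 = W_1 X$ into $Y_2 = W_2 X^T$ and then cancel $X X^T = I_d$ to obtain $Y_2 = W_1$. Your closing caution about using $X X^T = I_d$ rather than $X^T X = I_n$ is also well placed and matches the paper's reasoning.
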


\begin{proof}
From the second dual constraint (see Figure \ref{fig:theo-3}, left), for the second network it stems:
\begin{equation}
    Y_1 = W_2 = W_1 X
\end{equation}
From the assumption on the inputs (rows of the matrix $X$), it results  $X X^T = I_d$. The first neural architecture yields (see Figure \ref{fig:theo-3}, right):
\begin{equation}
    Y_2 = W_2 X^T \implies Y_2 = W_1 X X^T = W_1
\end{equation}
\end{proof}
Theorem \ref{theo:HD2} justifies the use of the first single-layer neural network as a competitive layer.

\begin{corollary}[Self-supervised learning] \label{corollary:self}
The assumption of Theorem \ref{theo:HD2} implies the construction of labels for the base network.
\end{corollary}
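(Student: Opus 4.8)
The plan is to make the hypotheses of Theorem~\ref{theo:HD2} explicit and then exhibit the label matrix directly. Its assumption has two parts: the samples of $X$ are uncorrelated with unit variance, which (as used in the preceding proofs) yields $X X^T = I_d$ and, when needed, $X^T X = I_n$; and the second dual constraint $W_2 = Y_1$. The base network is the linear map $Y_1 = W_1 X$ with $k$ output units, so training it by gradient descent would require a target matrix $T_1 \in \mathbb{R}^{k\times n}$, i.e.\ one $k$-dimensional label vector for each observation (column of $X$). The corollary claims that such a $T_1$ is already fixed by the hypotheses, so no external supervision is needed.

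First I would observe that the constraint $W_2 = Y_1$ already identifies the label matrix: setting $T_1 := W_2$ makes the base output coincide with the weights learnt by the dual network on $X^T$. To give this a more transparent form I would then invoke the conclusion of Theorem~\ref{theo:HD2}, namely the first dual constraint $W_1 = Y_2$; substituting into the base equation gives $Y_1 = W_1 X = Y_2 X$, so $T_1 = Y_2 X$, whose entries are the inner products between the learnt prototypes (the output batch $Y_2$ of the dual network) and the observations --- a similarity-to-prototype code computed purely from the unlabelled $X$. I would also record the consistency check: $Y_2 X = W_2 X^T X = W_2$ using $X^T X = I_n$, so the two expressions for $T_1$ agree, as they must. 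This is exactly the self-supervised pattern --- one branch of the architecture manufactures the targets of the other from the raw input alone.

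The step that needs the most care is definitional rather than algebraic: pinning down in what sense $W_2$ (equivalently $Y_2 X$) ``is a label'', since during joint optimization $W_2$ and $Y_2$ evolve and $W_2 = Y_1$ is a constraint being enforced, not a static fact. I would handle this by phrasing the construction as a target recomputed at every step --- a moving target delivered by the dual network --- and by appealing to Theorem~\ref{theo:HD2} to argue that enforcing $W_2 = Y_1$ automatically enforces $W_1 = Y_2$, so the base and dual networks remain mutually consistent and the induced labelling stays coherent throughout training. A closing remark would note that this mechanism is what entitles one to call the scheme self-supervised: the labels of the base competitive layer are synthesised from $X$ by its own dual.
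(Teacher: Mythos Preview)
Your proposal is correct and follows the same idea as the paper: the second dual constraint $W_2=Y_1$ furnishes the target matrix for the base network, so $W_2$ (equivalently $Y_2X$) acts as a self-generated label. The paper's own proof is a one-sentence sketch pointing to the same mechanism via Figure~\ref{fig:theo-3}; your version is considerably more explicit (the alternative expression $T_1=Y_2X$, the consistency check via $X^TX=I_n$, and the remark on moving targets), but these are elaborations rather than a different route.
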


\begin{proof}
As sketched in Figure \ref{fig:theo-3}, under the assumption of the equivalence between the training of the dual network (building of prototypes) and the architecture of the base network (output neurons as prototypes), the previous theorem implies the second dual constraint, which means the construction of a self-organized label.
\end{proof}

Thanks to this corollary, the base network can work in a self-supervised way, by using the results of the dual self-organization, to infer information on the dataset. This results in a new approach to self-supervised learning.

\subsection{Clustering as a loss minimization}
\label{sec:analysis}

The theoretical framework developed in Section \ref{sec:duality} can be easily adapted to accommodate for a variety of unsupervised learning tasks by designing a suitable loss function.
One of the most common prototype-based loss functions employed for clustering aims at minimizing the expected squared quantization error \cite{gray1984vector}.
Depending on the feature subspace, some clusters may have complex shapes, therefore using only one prototype per cluster may result in a poor representation. To overcome this limitation, each cluster can be represented by a graph composed of a collection of connected prototypes. The corresponding loss function can be written as:
\begin{equation}
\label{eq:loss}
    \mathcal{L} = \mathcal{Q} + \lambda ||E||_2
\end{equation}
where $\mathcal{Q}$ is the quantization error and $E$ is the adjacency matrix describing the connections between prototypes. 
The $\mathcal{Q}$ term is estimated from the Voronoi sets of the prototypes, which require the evaluation of the edm between $X$ and $Y$.
The $E$ term uses the CHL rule, which implies the estimation of the first and second winner w.r.t. each sample by means of the same edm.
By using the Lagrangian term $\lambda ||E||_2$, the complexity of the graph representing connections among prototypes can be minimized, in order to learn the minimal topological structure. Lonely prototypes (i.e. prototypes without connections) may represent outliers and can be easily pruned or examined individually.

The minimization of Eq. \ref{eq:loss} can be exploited for analysing the topological properties of the input manifolds. However, this is out of the scope of this paper. Indeed, it allows both the detection of clusters by means of the connectedness of the graphs and the best number of prototypes (pruning from a user-defined number of output units). This technique addresses the problem of the choice of prototypes in k-means.

Section \ref{sec:duality} established a set of conditions for the duality of two single-layer feedforward neural networks only in terms of their architecture. Instead, the choice of the learning process determines their application.
In case of clustering, they correspond to the VCL and DCL respectively, if they are both trained by the minimization of Eq. \ref{eq:loss}.
% Section \ref{sec:duality} established a set of conditions under which a single-layer feedforward neural network (e.g. VCL) and its dual (e.g. DCL) are equivalent. However, this theory shows such an equivalence only in terms of the architecture of the two neural networks. 
However, as it will be shown in Section \ref{sec:flows}, the equivalence in the architecture does not imply an equivalence in the training process, even if the loss function and the optimization algorithm are the same. Indeed, in a vanilla competitive layer there is no forward pass as $Y_1$ is not computed nor considered and the prototype matrix is just the weight matrix $W_1$:
\begin{equation}
    \widehat{P}_1 = \Big[\textrm{prototype}_1, \dots, \textrm{prototype}_k \Big] = W_1
\end{equation}
where $\textrm{prototype}_i \in \mathbb{R}^{d \times 1}$.
In a dual competitive layer, instead, the prototype matrix corresponds to the output $Y_2$; hence, the forward pass is a linear transformation of the input $X^T$ through the weight matrix $W_2$:
\begin{eqnarray}
\widehat{P}_2 &=& \Big[\textrm{prototype}_1 \dots \textrm{prototype}_k \Big]^T = Y_2 =  W_2 X^T = \nonumber \\
&=& \begin{bmatrix}
\mathbf{w}_1^T \\
\mathbf{w}_2^T \\
\dots \\
\mathbf{w}_k^T
\end{bmatrix}
\begin{bmatrix}
\mathbf{f}_1 & \mathbf{f}_2 & \dots & \mathbf{f}_d \\
\end{bmatrix} = \nonumber \\
&=&
\begin{bmatrix}
\mathbf{w}_1^T \mathbf{f}_1 & \mathbf{w}_1^T \mathbf{f}_2 & \dots & \mathbf{w}_1^T \mathbf{f}_d \\
\mathbf{w}_2^T \mathbf{f}_1 & \mathbf{w}_2^T \mathbf{f}_2 & \dots & \mathbf{w}_2^T \mathbf{f}_d \\
\dots & \dots & \ddots & \vdots \\
\mathbf{w}_k^T \mathbf{f}_1 & \mathbf{w}_k^T \mathbf{f}_2 & \dots & \mathbf{w}_k^T \mathbf{f}_d \\
\end{bmatrix}
\end{eqnarray}
where $\mathbf{w}_i$ is the weight vector of the $i$-th output neuron of the dual network and $\mathbf{f}_i$ is the $i$-th feature over all samples of the input matrix $X$.
The components of each prototype are computed using a constant weight $\mathbf{w}_i$, because $\widehat{P}_2$ is an outer product, which has rank $1$. Besides, each component is computed as it were a one dimensional learning problem. For instance, the first component of the prototypes is $\Big[ \mathbf{w}_1^T \mathbf{f}_1 \dots \mathbf{w}_k^T \mathbf{f}_1 \Big]^T$; which means that the first component of all the prototypes is computed by considering just the first feature $\mathbf{f}_1$. Hence, each component is independent from all the other features of the input matrix, allowing the forward pass to be just like a collection of $d$ one-dimensional problems.

Such differences in the forward pass have an impact on the backward pass as well, even if the form of the loss function is the same for both systems. However, the parameters of the optimization are not the same. For the base network:
\begin{equation}
    \mathcal{L} = \mathcal{L} (X,W_1)
\end{equation}
while for the dual network:
\begin{equation}
    \mathcal{L} = \mathcal{L} (X^T,Y)
\end{equation}
where $Y$ is a linear transformation (filter) represented by $W_2$.
In the base competitive layer the gradient of the loss function with respect to the weights $W_1$ is computed directly as:
\begin{eqnarray}
\nabla \mathcal{L} (W_1) = \frac{d\mathcal{L}}{dW_1}
\end{eqnarray}
On the other hand, in the dual competitive layer, the chain rule is required to computed the gradient with respect to the weights $W_2$ as the loss function depends on the prototypes $Y_2$:
\begin{eqnarray}
\nabla \mathcal{L} (W_2) = \frac{d\mathcal{L}}{dW_2} = \frac{d\mathcal{L}}{dY_2} \cdot \frac{dY_2}{dW_2}
\end{eqnarray}
As a result, despite the architecture of the two layers is equivalent, the learning process is quite different.

\subsection{Simulations}

\begin{figure*}[!b]
    \centering

    \includegraphics[width=0.33\columnwidth, trim= 0 80 0 0, clip]{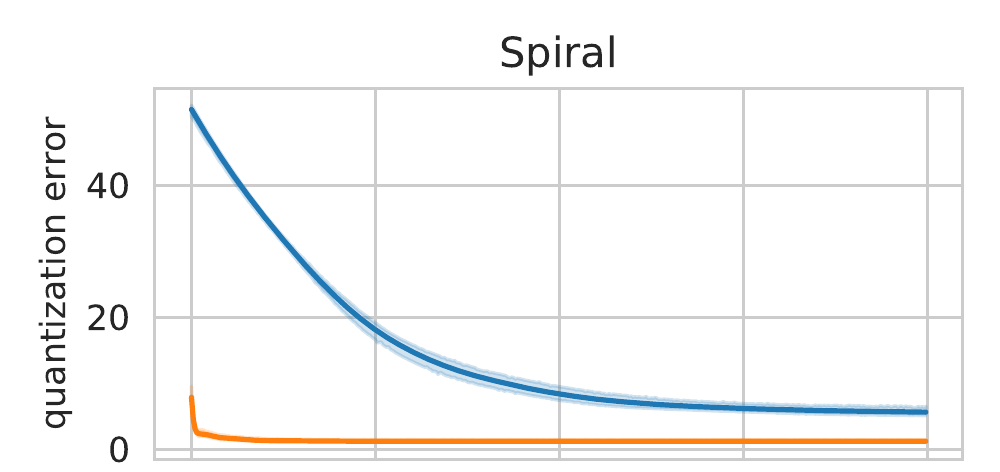}
    \includegraphics[width=0.33\columnwidth, trim= 0 80 0 0, clip]{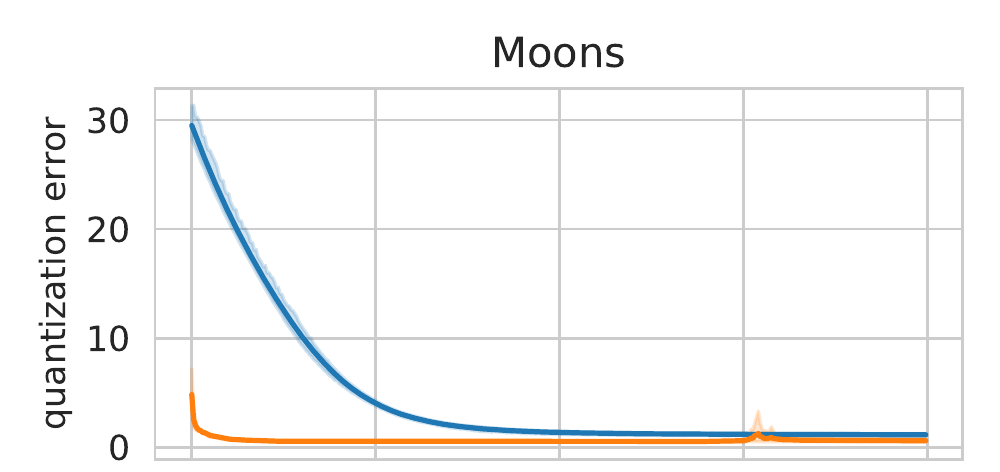}
    \includegraphics[width=0.33\columnwidth, trim= 0 80 0 0, clip]{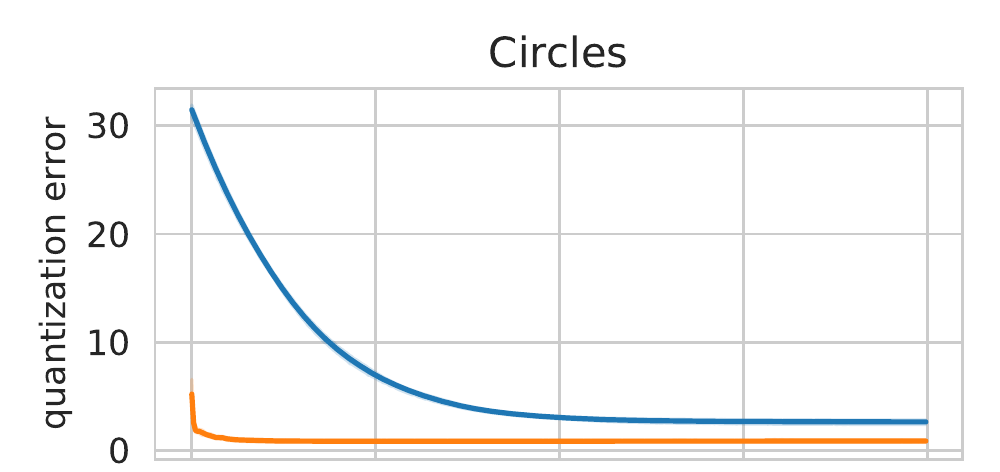}\\
    
    \includegraphics[width=0.33\columnwidth, trim= 0 80 0 0, clip]{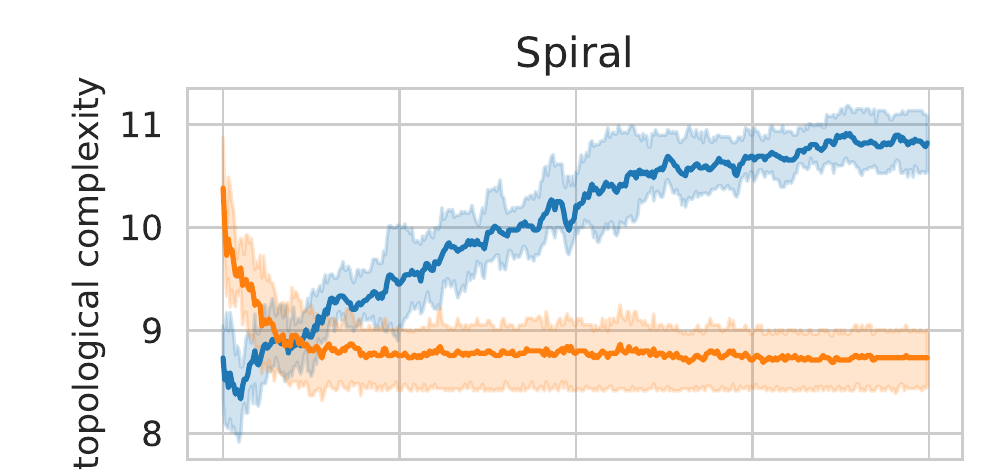}
    \includegraphics[width=0.33\columnwidth, trim= 0 80 0 0, clip]{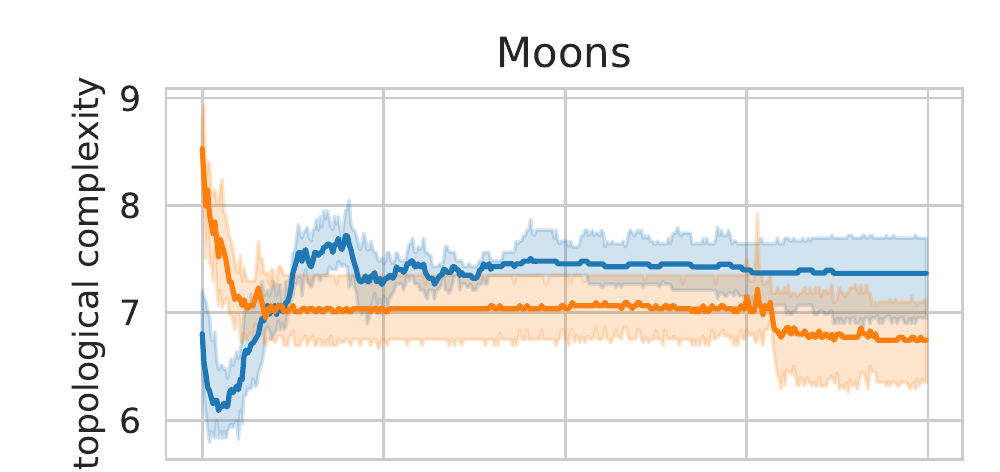}
    \includegraphics[width=0.33\columnwidth, trim= 0 80 0 0, clip]{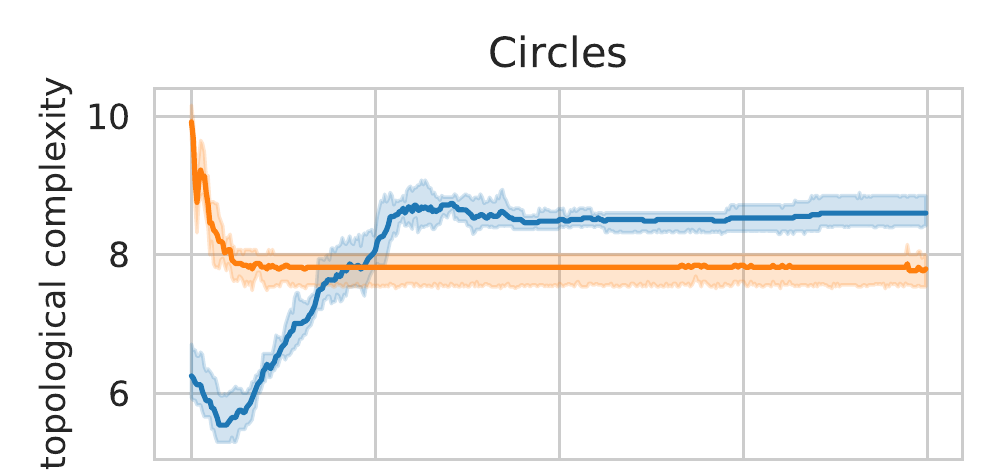}\\
    
    \includegraphics[width=0.33\columnwidth, trim= 0 30 0 0, clip]{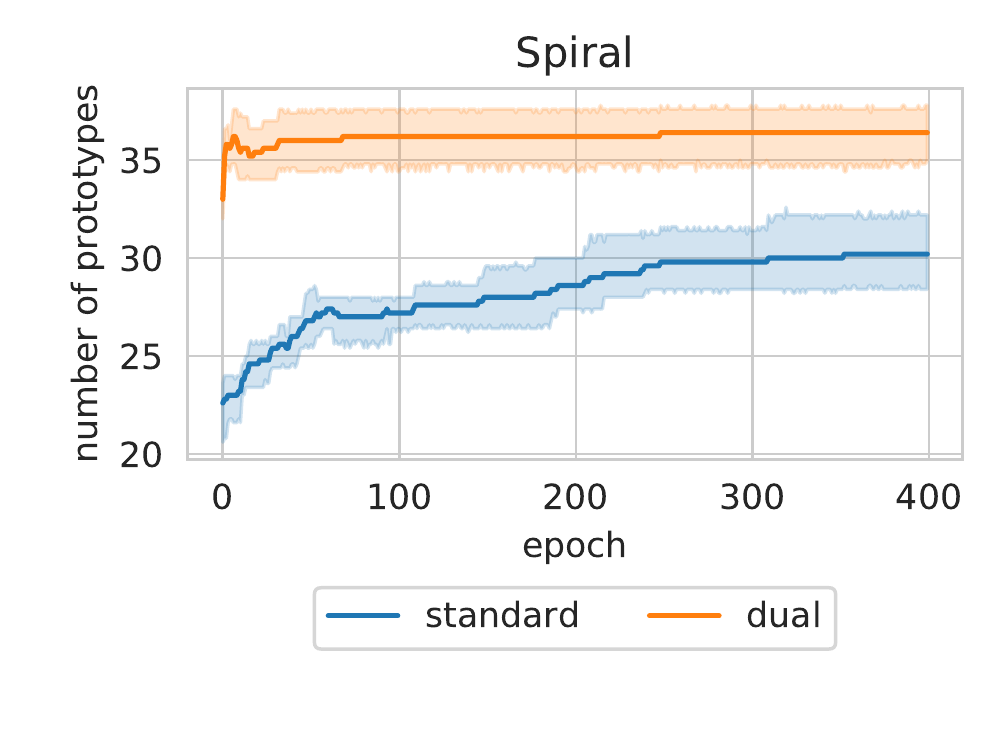}
    \includegraphics[width=0.33\columnwidth, trim= 0 30 0 0, clip]{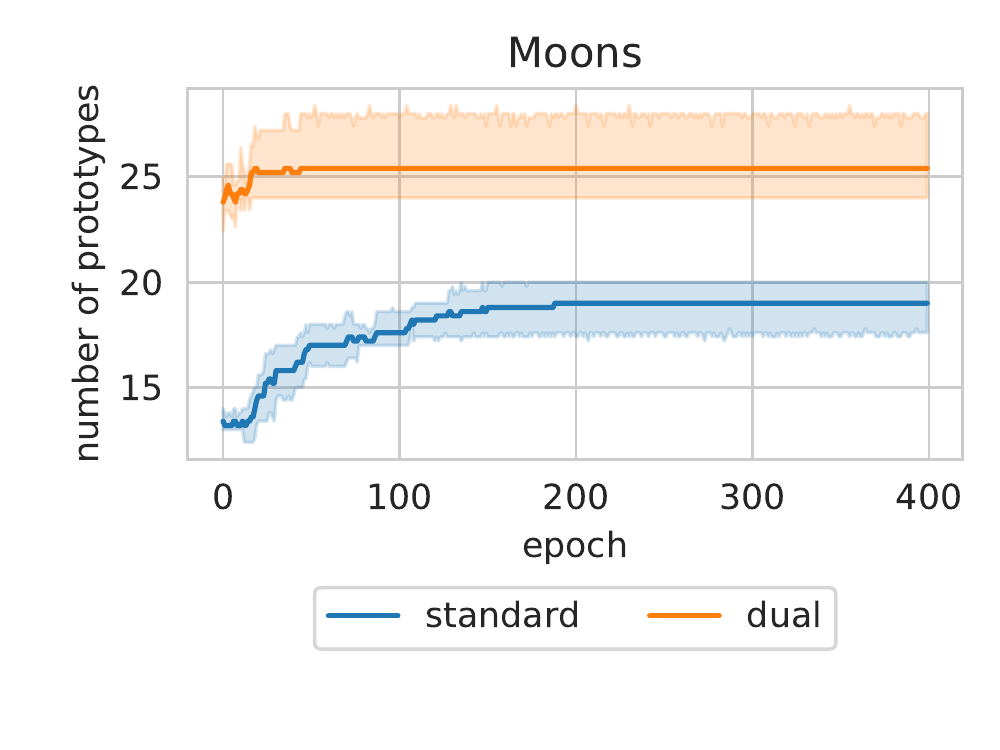}
    \includegraphics[width=0.33\columnwidth, trim= 0 30 0 0, clip]{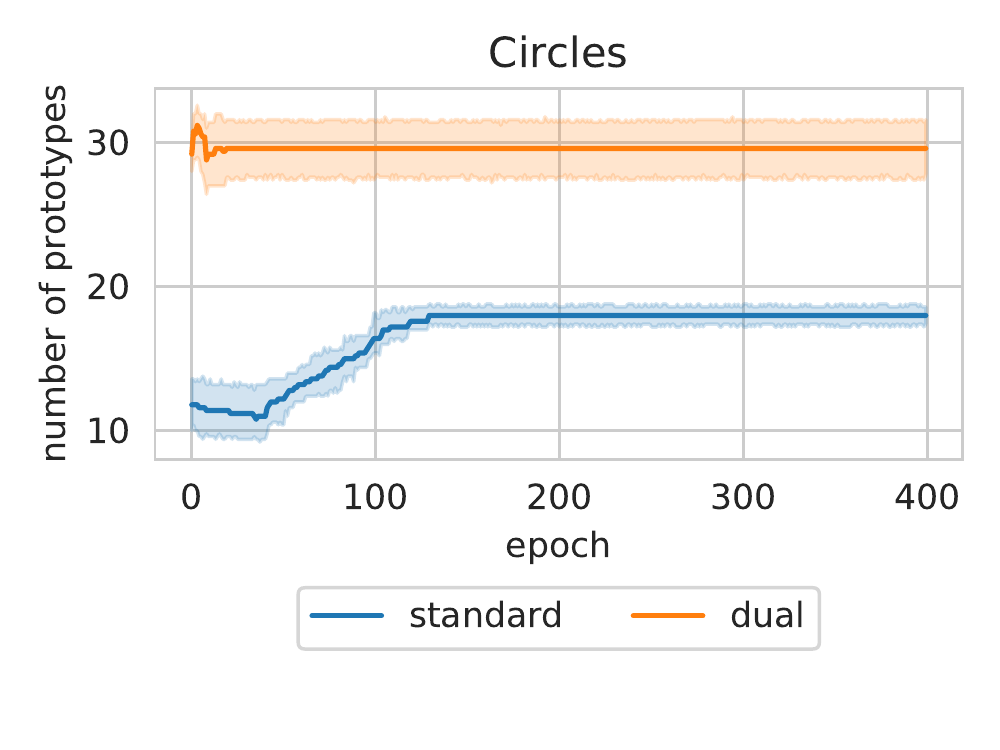}
    \caption{Comparison of three key metrics between the vanilla single-layer network and its dual over $10$ runs. The metrics are: the quantization error (\textbf{top row}), the norm of the matrix of the edges (\textbf{middle row}), and the number of valid prototypes (\textbf{bottom row}). The metrics are computed on three different datasets: \textit{Spiral} (\textbf{left column}), \textit{Moons} (\textbf{middle column}), and \textit{Circles} (\textbf{right column}). Error bands represent the standard error of the mean.}
    \label{fig:loss}
\end{figure*}
In order to rigorously assess the main characteristics of the learning process, several metrics are evaluated while training the two networks on three synthetic datasets (see Appendix \ref{app:1}). Figure \ref{fig:loss} shows for each dataset the dynamics of three key metrics for both VCL and DCL: the quantization error, the topological complexity of the solution (i.e. $||E||$), and the number of valid prototypes (i.e. the ones with a non-empty Voronoi set). By looking at the quantization error both networks tend to converge to similar local minima in all scenarios, thus validating their theoretical equivalence.
Nonetheless, the single-layer dual network exhibits a much faster rate of convergence compared to the vanilla competitive layer. The most significant differences are outlined (i) by the number of valid prototypes as DCL tends to employ more resources and (ii) by the topological complexity as VCL favors more complex solutions.
Fig. \ref{fig:exp1} shows topological clustering results after $800$ epochs (hyper-parameter settings are described in Appendix \ref{app:1}). As expected, both neural networks yield an adequate estimation of prototype positions, even though the topology learned by DCL is far more accurate in following the underlying manifolds w.r.t. to VCL.

\begin{figure}[!b]
    \centering
    \rotatebox{90}{$\ \ \ \ $\parbox{0.5cm}{\textsc{VCL}}}
    \includegraphics[width=0.28\columnwidth]{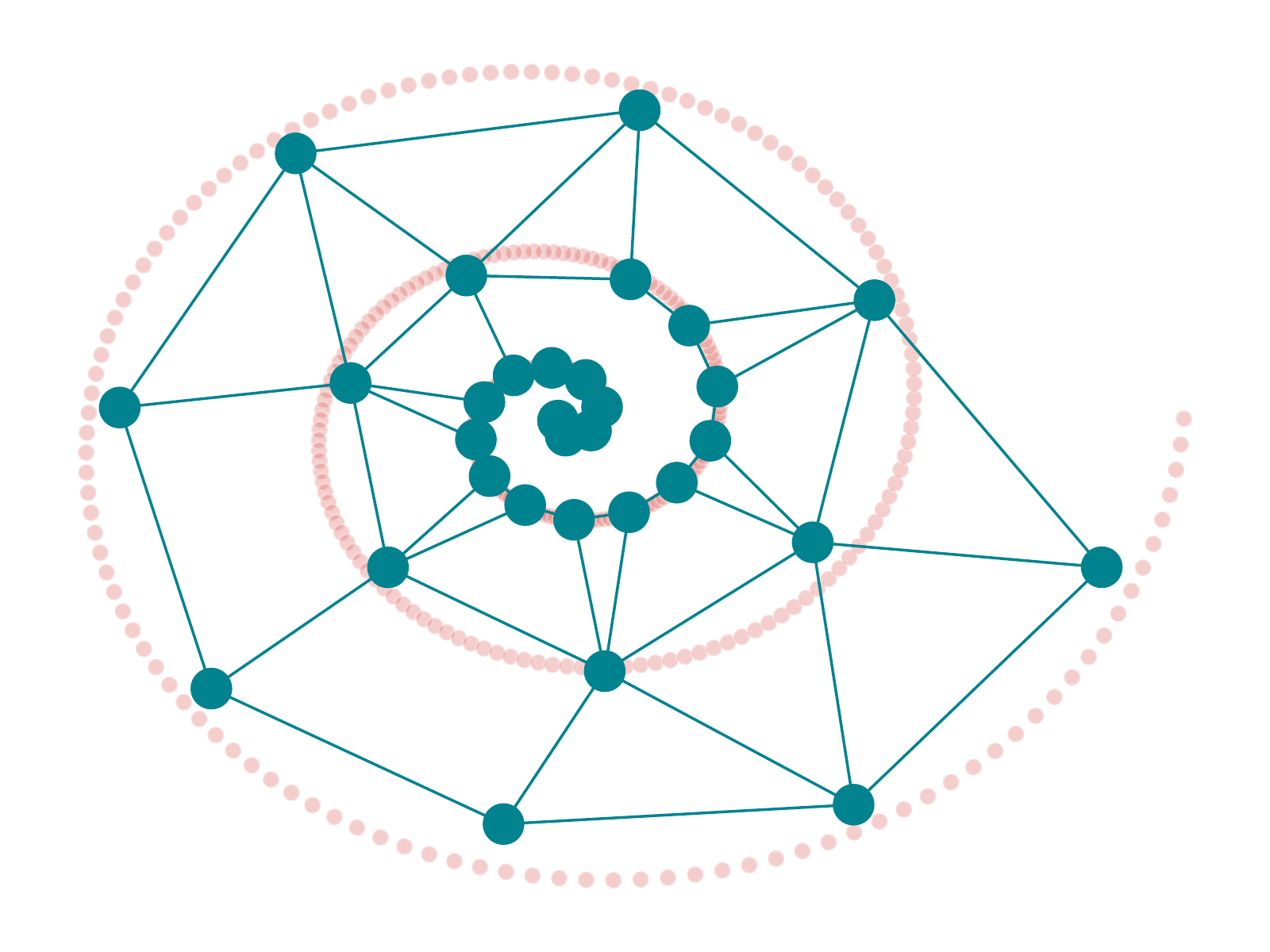}
    \includegraphics[width=0.28\columnwidth]{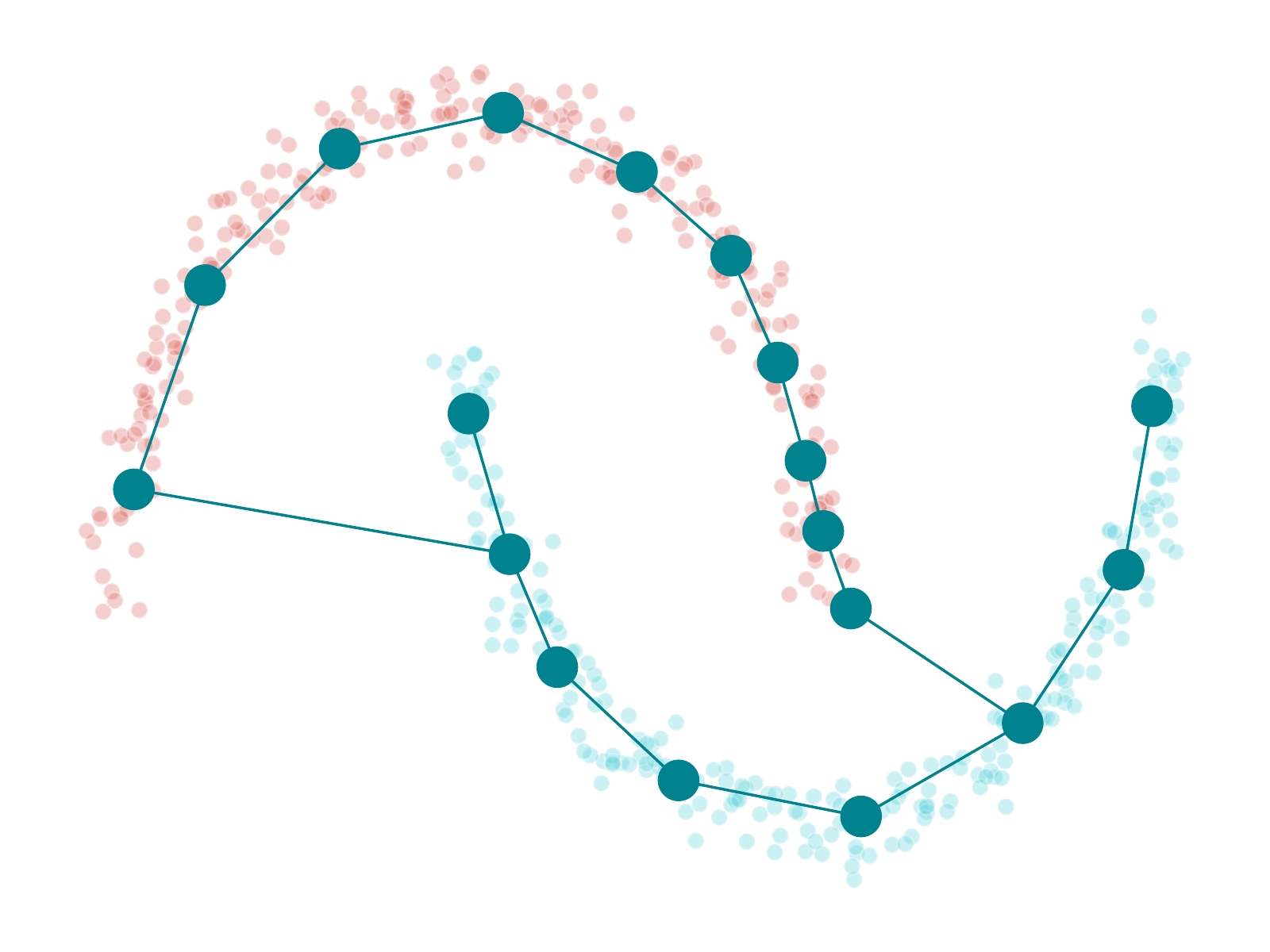}
    \includegraphics[width=0.28\columnwidth]{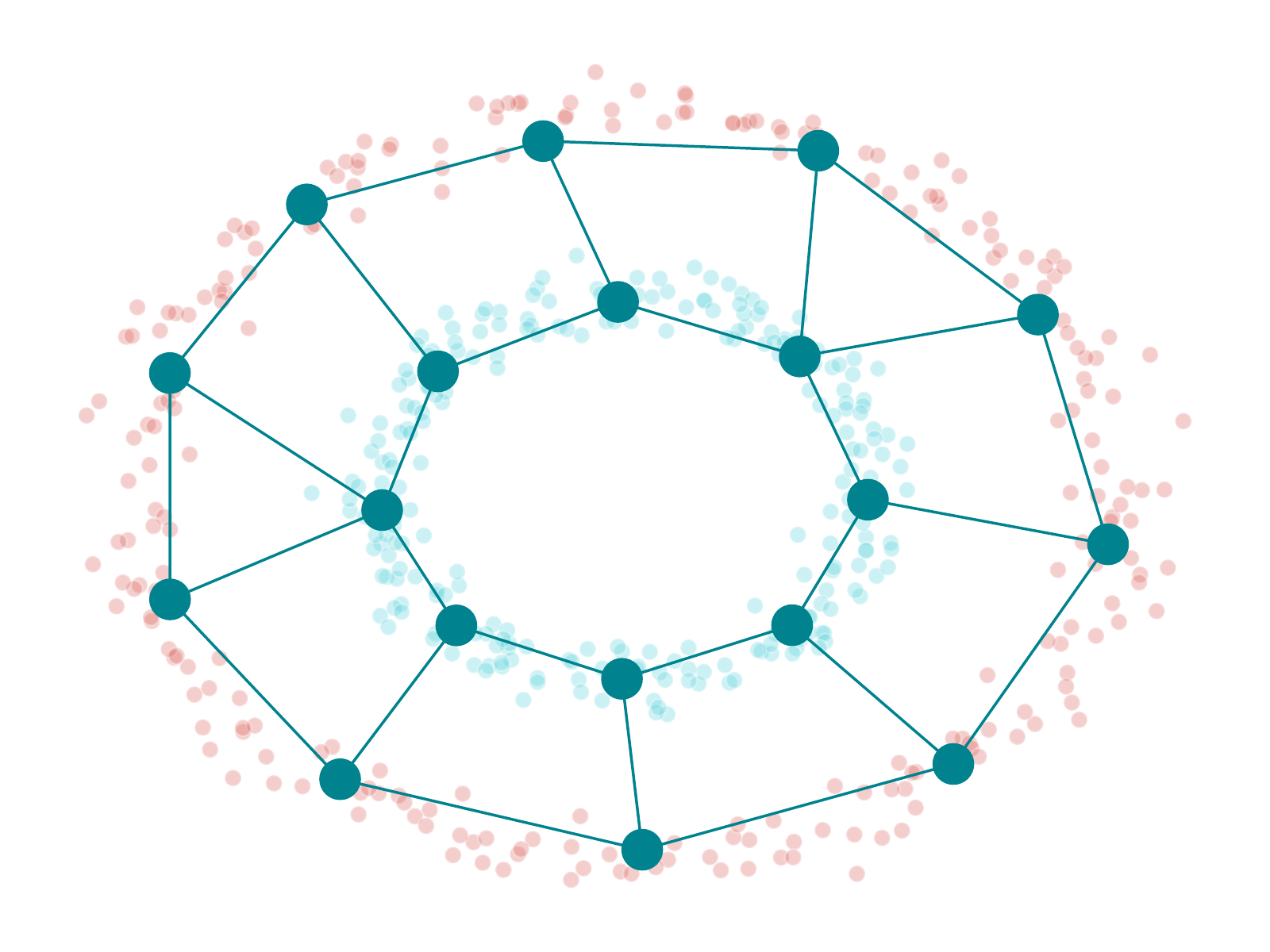}\\
    \vskip 0.5cm
    \rotatebox{90}{$\ \ \ \ $\parbox{0.5cm}{\textsc{DCL}}}
    \includegraphics[width=0.28\columnwidth]{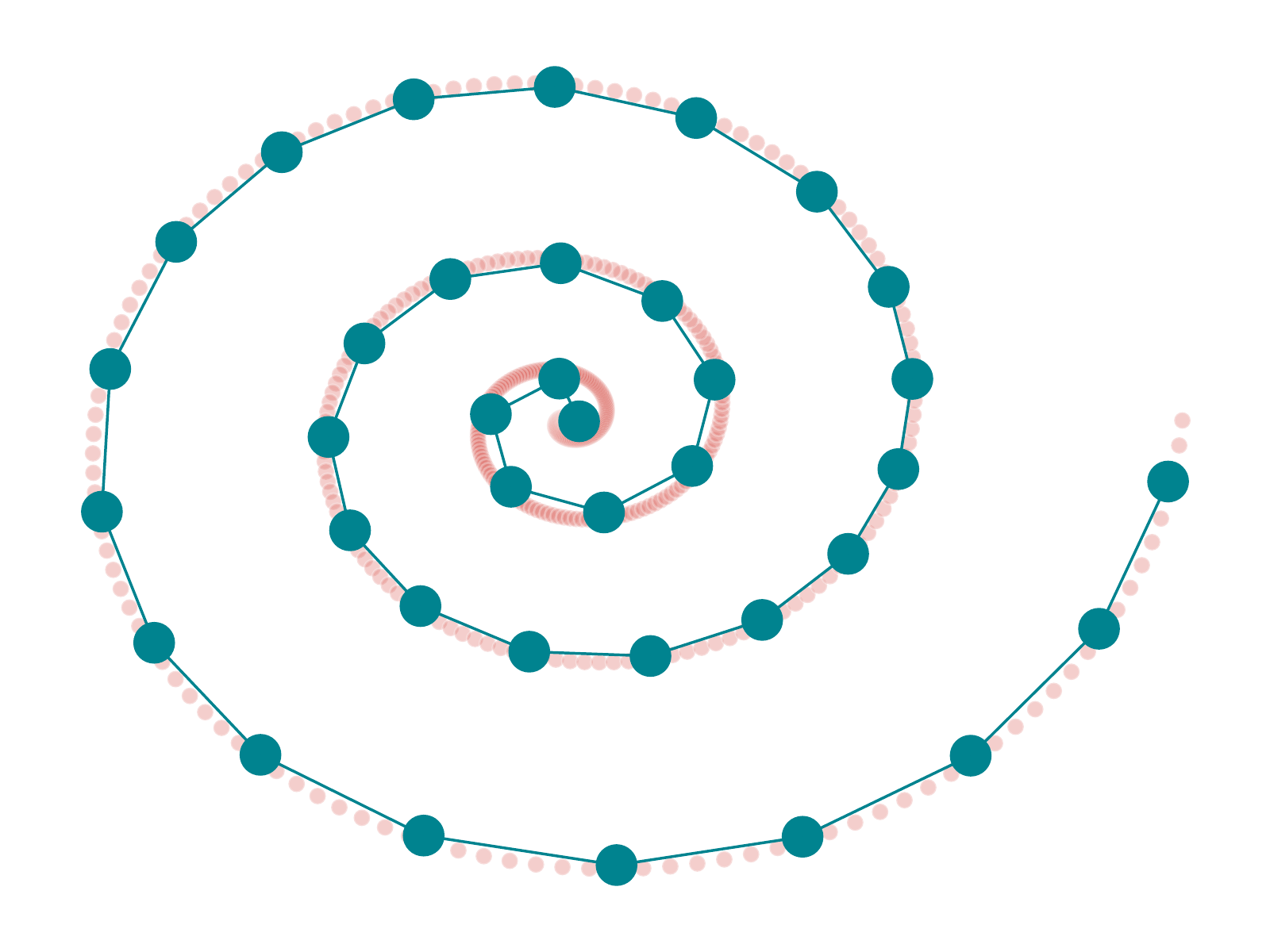}
    \includegraphics[width=0.28\columnwidth]{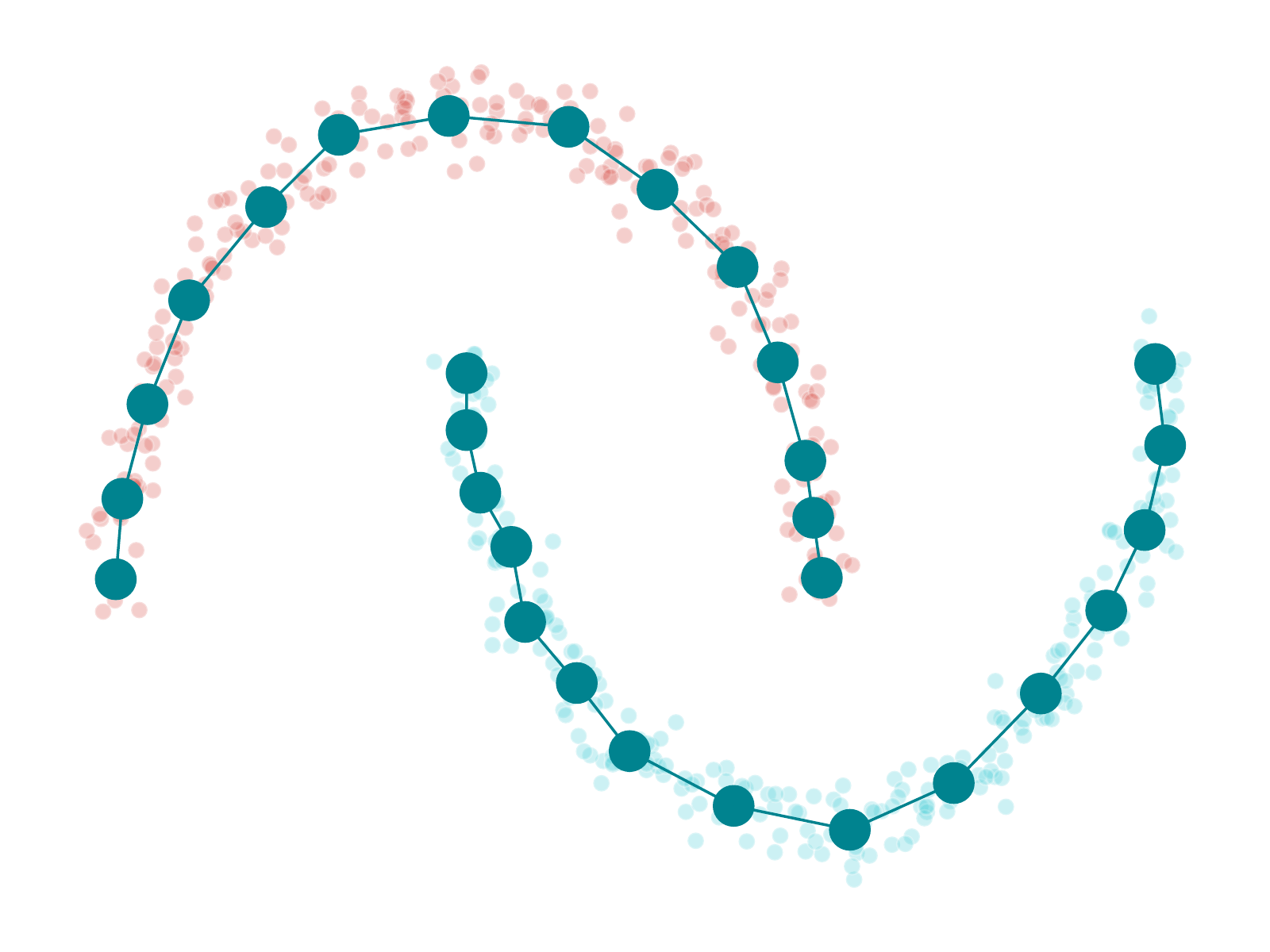}
    \includegraphics[width=0.28\columnwidth]{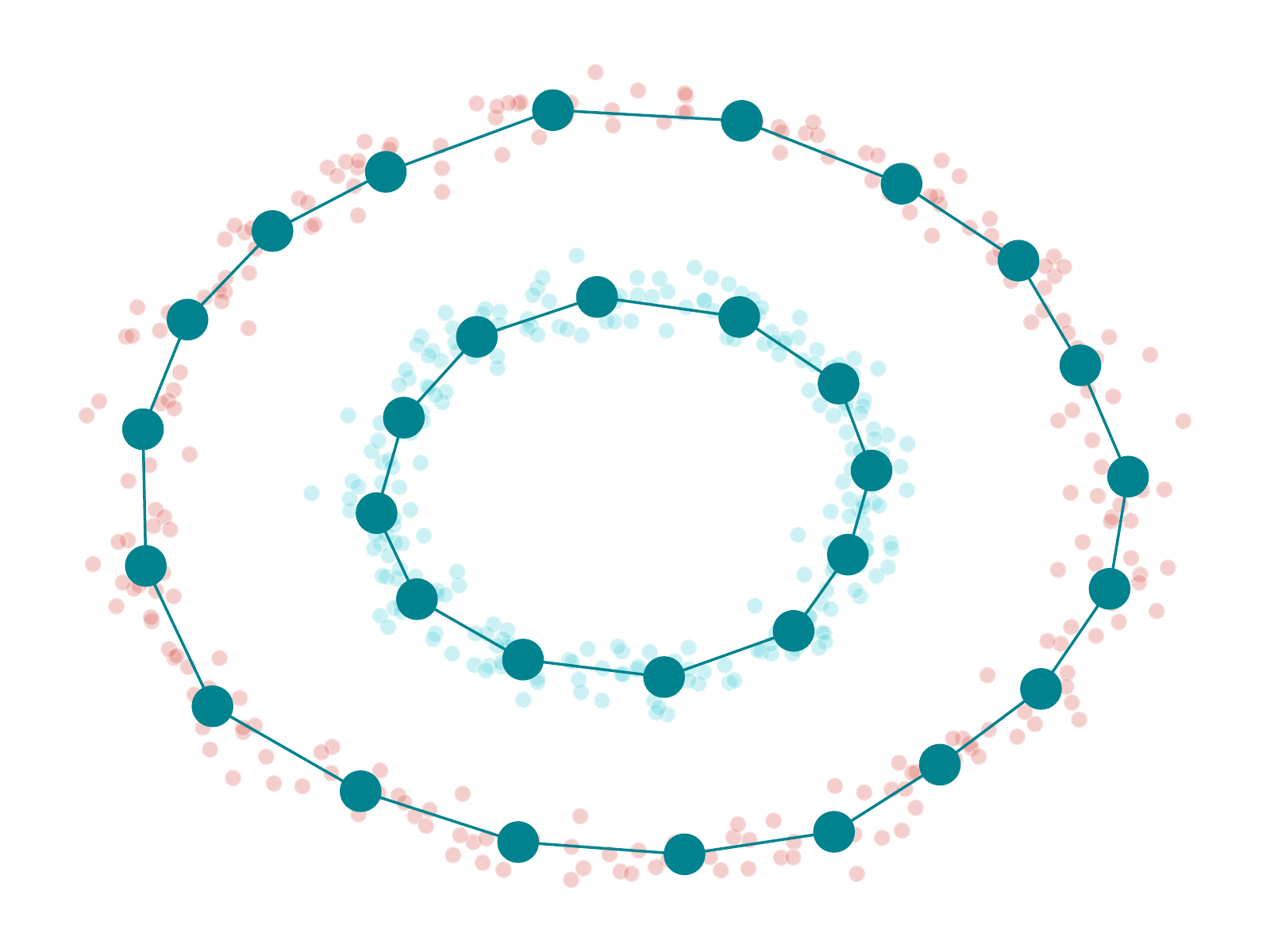}\\
    \caption{Experiments on synthetic datasets. From left to right: \textit{Spiral}, \textit{Moons}, and \textit{Circles} dataset.}
    \label{fig:exp1}
\end{figure}
% \subsection{Initilization}

\begin{figure}[!b]

    \centering
    \rotatebox{90}{$\ \ \ \ $\parbox{0.5cm}{\textsc{VCL}}}
    \includegraphics[width=0.28\columnwidth]{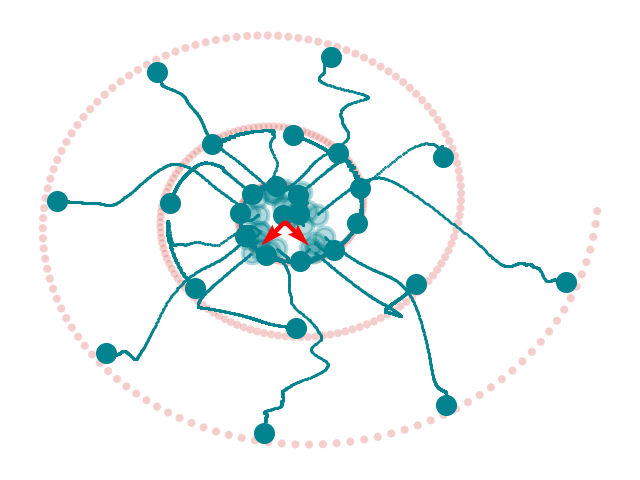}
    \includegraphics[width=0.28\columnwidth]{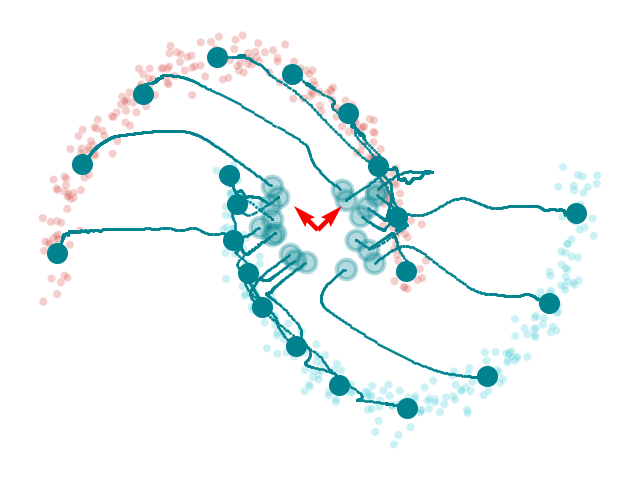}
    \includegraphics[width=0.28\columnwidth]{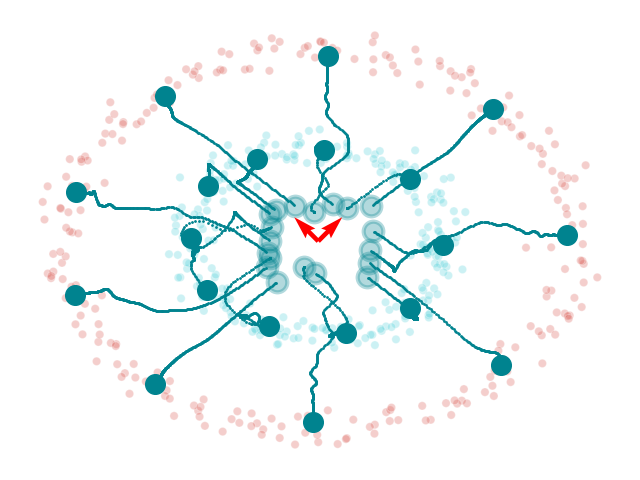}\\
    \vskip 0.5cm
    \rotatebox{90}{$\ \ \ \ $\parbox{0.5cm}{\textsc{DCL}}}
    \includegraphics[width=0.28\columnwidth]{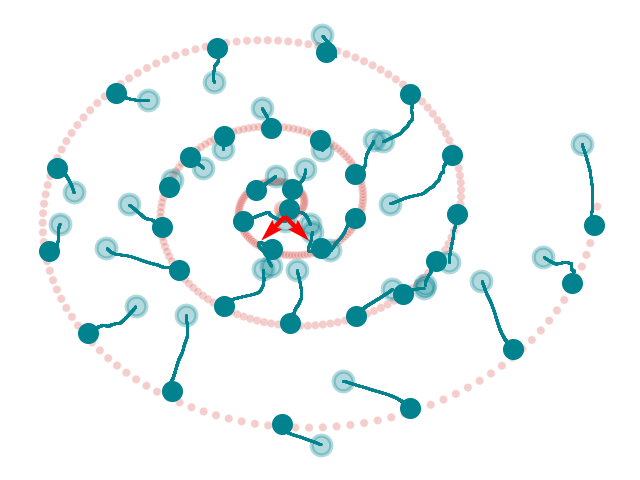}
    \includegraphics[width=0.28\columnwidth]{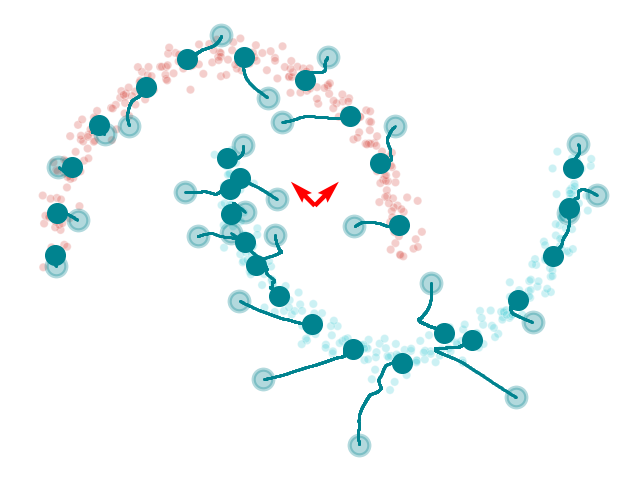}
    \includegraphics[width=0.28\columnwidth]{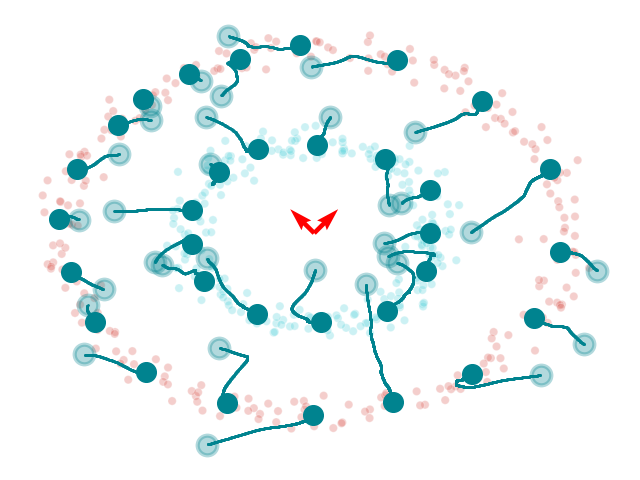}\\
    
    \caption{Dynamical simulations. From left to right: \textit{Spiral}, \textit{Moons}, and \textit{Circles} dataset.}
    \label{fig:exp2}
\end{figure}
Fig. \ref{fig:exp2} shows the trajectories of the prototypes during the training for both networks. The parameters in both networks have been initialized by means of the Glorot initializer \cite{glorot2010understanding}, which draws small values from a normal distribution centered on zero. For VCL, these parameters are the prototypes and they are initially clustered around the origin, as expected. For DCL, instead, the initial prototypes are an affine transformation of the inputs parameterized by the weight matrix. This implies the initial prototypes are close to a random choice of the input data. The VCL trajectories tend to the closest to the origin cluster and, then some of them spread towards the furthest manifolds. The DCL trajectories are much shorter because of the closeness of the initial prototypes to the input clusters. These considerations reveal the better suitability of DCL to deep learning traditional initializations.

Section \ref{sec:flows} yields a theoretical explanation for the observed results.

\section{Theoretical analysis}
\label{sec:flows}

\subsection{Stochastic approximation theory of the gradient flows}

In the following, the gradient flows of the vanilla and the dual single-layer neural networks are formally examined when trained using the quantization error, one of the most common loss functions used for training unsupervised neural networks in clustering contexts.
The following theory is based on the assumption of $\lambda=0$ in Eq. \ref{eq:loss}. Taking into account the edge error only relaxes the analysis, but the results remain valid.
Under the stochastic approximation theory, the asymptotic properties of the gradient flows of the two networks can be estimated.

\subsubsection{Base layer gradient flow}
For each prototype $j$, represented in the base layer by the weight vector $W_1^j \in \mathbb{R}^d$ of the $j$-th neuron (it is the $j$-th row of the matrix $W_1$), the contribution of its Voronoi set to the quantization error is given by:
\begin{equation}
    E^j = \sum_{i=1}^{n_j} \norm[\big]{x_i - W_1^j}_2^2 = \sum_{i=1}^{n_j} \big( \norm[\big]{x_i}_2^2 +  \norm[\big]{W_1^j}_2^2 - 2 x_i^T W_1^j \big)
\end{equation}
where $n_j$ is the cardinality of the $j$-th Voronoi set. The corresponding gradient flow of the base network is the following:
\begin{equation}
    W_1^j (t+1) = W_1^j (t) - \epsilon \nabla_{W_1^j} E^j = W_1^j (t) - \epsilon \sum_{i=1}^{n_j} \big( W_1^j - x_i \big)
\end{equation}
being $\epsilon$ the learning rate. The averaging ODE holds:
\begin{equation}
    \frac{dW_1^j}{dt} = - W_1^j + \mu_j
\end{equation}
where $\mu_j=\mathbb{E}[x_i]$ is the expectation in the limit of infinite samples of the $j$-th Voronoi set, and corresponds to the centroid of the Voronoi region. The unique critical point of the ODE is given by:
\begin{equation}
    W_{1,\textrm{crit}}^j = \mu_j
\end{equation}
and the ODE can be rewritten as:
\begin{equation}
    \frac{dw_1^j}{dt} = -w_1^j
\end{equation}
under the transformation $w_1^j = W_1^j - W_{1,\textrm{crit}}^j$ in order to study the origin as the critical point. The associated matrix is $-I_d$, whose eigenvalues are all equal to $-1$ and whose eigenvectors are the vectors of the standard basis. Hence, the gradient flow is stable and decreases in the same exponential way, as $e^{-t}$, in all directions. The gradient flow of one epoch corresponds to an approximation of the second step of the generalized Lloyd iteration, as stated before.

\subsubsection{Dual layer gradient flow}
In the dual layer, the prototypes are estimated by the outputs, in such a way that they are represented by the rows of the $Y_2$ matrix. Indeed, the $j$-th prototype is now represented by the row vector $(Y_2^j)^T$, from now on called $y_j^T$ for sake of simplicity. It is computed by the linear transformation:
\begin{equation}
    y_j^T = (W_2^j)^T [x_1 \ \cdots \ x_d] = (W_2^j)^T X^T = \Omega_j^T X^T
\end{equation}
where $x_i \in \mathbb{R}^n$ is the $i$-th row of the training set $X$ and $W_2^j \in \mathbb{R}^n$ is the weight vector of the $j$-th neuron (it is the $j$-th row of the matrix $W_2$), and is here named as $\Omega_j$ for simplicity. Hence, the $j$-th prototype is computed as:
\begin{equation}
    y_j = X \Omega_j
\end{equation}
and its squared (Euclidean) 2-norm is:
\begin{equation}
    \norm[\big]{y_j}_2^2 = \Omega_j^T X^T X \Omega_j
\end{equation}
For the $j$-th prototype, the contribution of its Voronoi set to the quantization error is given by:
\begin{equation}
    E^j = \sum_{i=1}^{n_j} \norm[\big]{x_i - y_1^j}_2^2 = \sum_{i=1}^{n_j} \big( \norm[\big]{x_i}_2^2 +  \norm[\big]{y_1^j}_2^2 - 2 x_i^T y_1^j \big)
\end{equation}
with the same notation as previously. The gradient flow of the dual network is computed as:
\begin{equation}
    \Omega_j (t+1) = \Omega_j (t) - \epsilon \nabla_{\Omega_j} E^j
\end{equation}
being $\epsilon$ the learning rate. The gradient is given by:
\begin{eqnarray}
    \nabla_{\Omega_j} E^j &=& \nabla_{\Omega_j} \sum_{i=1}^{n_j} (x_i^T x_i + \Omega_j^T X^T X \Omega_j - 2 \Omega_j^T X^T x_i) = \nonumber \\
    &=& 2 (X^T X \Omega_j - X^T x_i)
\end{eqnarray}
The averaging ODE is estimated as:
\begin{equation}
    \frac{d\Omega_j}{dt} = - \big(X^T X \Omega_j - X^T \mu_j \big) 
\end{equation}
The unique critical point of the ODE is the solution of the normal equations:
\begin{equation}
    X^T X \Omega_j = X^T \mu_j
\end{equation}
The linear system can be solved only if  $X^T X \in \mathbb{R}^{n\times n}$ is full rank. This is true only if $n \leq d$ (the case $n=d$ is trivial and, so, from now on the analysis deals with $n<d$) and all columns of $X$ are linearly independent. In this case, the solution is given by:
\begin{equation}
    \Omega_{j,\textrm{crit}} = \big( X^T X \big)^{-1} X^T \mu_j = X^+ \mu_j
\end{equation}
where $X^+$ is the pseudoinverse of $X$. The result corresponds to the least squares solution of the overdetermined linear system:
\begin{equation}
    X \Omega_j = \mu_j
\end{equation}

which is equivalent to:
\begin{equation}
    \Omega_j^T X^T = \mu_j^T
\end{equation}
This last system shows that the dual layer asymptotically tends to output the centroids as prototypes. The ODE can be rewritten as:
\begin{equation}
    \frac{dw_j}{dt} = -X^T X w_j
\end{equation}
under the transformation $w_j = \Omega_j - \Omega_{j,\textrm{crit}}$ in order to study the origin as the critical point. The associated matrix is $- X^T X$. Consider the singular value decomposition (SVD) of $X=U \Sigma V^T$ where $U \in \mathbb{R}^{d \times d}$ and $V \in \mathbb{R}^{n \times n}$ are orthogonal and $\Sigma \in \mathbb{R}^{d \times n)}$ is diagonal (nonzero diagonal elements named singular values and called $\sigma_i$, indexed in decreasing order). The $i$-th column of $V$ (associated to $\sigma_i$) is written as $v_i$ and is named right singular vector. Then:
\begin{equation} \label{eq:eigen-decomp}
    X^T X = \big( U \Sigma V^T \big)^T U \Sigma V^T = V \Sigma^2 V^T
\end{equation}
is the eigenvalue decomposition of the sample autocorrelation matrix of the inputs of the dual network. It follows that the algorithm is stable and the ODE solution is given by:
\begin{equation}
    w_j (t) = \sum_{i=1}^{n} c_i v_i e^{- \sigma_i^2 t}
\end{equation}
where the constants depend on the initial conditions. The same dynamical law is valid for all the other weight neurons.
If $n>d$ and all columns of $X$ are linearly independent, it follows:
\begin{equation}
    \textrm{rank} (X) = \textrm{rank} (X^T X) = d
\end{equation}
and the system $X \Omega_j = \mu_j$ is underdetermined. This set of equations has a nontrivial nullspace and so the least squares solution is not unique. However, the least squares solution of minimum norm is unique. This corresponds to the minimization problem:
\begin{equation}
    \min(\Omega_j) \qquad \textrm{s.t.} \   X \Omega_j = \mu_j
\end{equation}
The unique solution is given by the normal equations of the second kind:
\begin{equation}
    \begin{cases}
    X X^T z = \mu_j \\
    \Omega_j = X^T z
    \end{cases}
\end{equation}
that is, by considering that $X X^T$ has an inverse:
\begin{equation}
    \Omega_j = X^T \big( X X^T \big)^{-1} \mu_j
\end{equation}
Multiplying on the left by $X X^T$ yields:
\begin{eqnarray}
    \big( X^T X \big) \Omega_j &=& \big( X^T X \big) X^T \big( X^T X \big)^{-1} \mu_j = \nonumber \\
    &=& X^T \big( X^T X \big) \big( X^T X \big)^{-1} \mu_j = X^T \mu_j
\end{eqnarray}
that is Eq.(b), which is the system whose solution is the unique critical point of the ODE (setting the derivative of Eq.(a) to zero).
Resuming, both cases give the same solution. However, in the case $n>d$ and $\textrm{rank}(X)=d$, the output neuron weight vectors have minimum norm and are orthogonal to the nullspace of $X$, which is spanned by $v_{n-d+1)}, v_{n-d+2)}, \dots , v_n$. Indeed,  $X^T X$ has $n-d$ zero eigenvalues, which correspond to centers. Therefore, the ODE solution is given by:
\begin{equation}
    \label{eq:general-evo}
    w_j (t) = \sum_{i=1}^{n-d} c_i v_i e^{- \sigma_i^2 t} + \sum_{i=n-d+1}^{n} c_i v_i
\end{equation}
This theory proves the following theorem.

\begin{theorem}[Dual flow and PCA] \label{thm:dual-pca}
The dual network evolves in the directions of the principal axes of its autocorrelation matrix (see Eq. \ref{eq:eigen-decomp}) with time constants given by the inverses of the associated data variances.
\end{theorem}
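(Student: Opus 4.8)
The plan is to start from the deterministic averaging ODE already obtained for the dual layer by the stochastic-approximation (ODE) method, namely $d\Omega_j/dt = -(X^TX\Omega_j - X^T\mu_j)$, and to linearize it about its unique critical point. Setting $w_j = \Omega_j - \Omega_{j,\mathrm{crit}}$, where $\Omega_{j,\mathrm{crit}}$ solves the normal equations $X^TX\Omega_j = X^T\mu_j$ (or its minimum-norm variant when $n>d$), removes the affine term and yields the homogeneous linear system $dw_j/dt = -X^TX w_j$. The whole statement then reduces to reading off the modal structure of this constant-coefficient linear ODE.

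Next I would diagonalize the system matrix. Using the SVD $X = U\Sigma V^T$ one gets $X^TX = V\Sigma^2 V^T$ (Eq. \ref{eq:eigen-decomp}), which is precisely the eigendecomposition of the sample autocorrelation matrix of the dual network's inputs: the right singular vectors $v_i$ are its eigenvectors, i.e. the principal axes, and the squared singular values $\sigma_i^2$ are the corresponding eigenvalues, i.e. the data variances along those axes. Expanding the solution in this orthonormal eigenbasis, $w_j(t) = \sum_i c_i(t)\, v_i$, decouples the ODE into the scalar equations $\dot c_i = -\sigma_i^2 c_i$, whose solutions are $c_i(t) = c_i(0)\, e^{-\sigma_i^2 t}$. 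Hence $w_j(t) = \sum_i c_i(0)\, v_i\, e^{-\sigma_i^2 t}$, so along each principal axis $v_i$ the flow relaxes exponentially with time constant $\tau_i = 1/\sigma_i^2$, the reciprocal of the associated variance; orthogonality of the $v_i$ makes these modes evolve independently. This is exactly the content of the theorem, and it specializes to Eq. \ref{eq:general-evo} once the degenerate directions are accounted for.

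The main obstacle — really the only nontrivial point — is the passage from the stochastic weight recursion to the averaging ODE, which must be justified by invoking a stochastic-approximation convergence result (Kushner--Clark / Ljung's ODE method) under the usual step-size and regularity conditions; one also has to treat the Voronoi-set-dependent expectation $\mu_j$ as (asymptotically) frozen so that the averaged dynamics are genuinely linear. A secondary subtlety is the rank-deficient regime $n>d$: there $X^TX$ has an $(n-d)$-dimensional kernel spanned by $v_{n-d+1},\dots,v_n$, the corresponding $\sigma_i$ vanish, and those modes stay constant rather than decay, so one must argue (as in the derivation leading to Eq. \ref{eq:general-evo}) that the minimum-norm critical point is the selected one and that the contracting directions remain exactly the principal axes of positive variance. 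With these points in place, the theorem follows immediately from the elementary theory of linear ODEs.
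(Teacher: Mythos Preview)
Your proposal is correct and follows essentially the same route as the paper: derive the averaging ODE $d\Omega_j/dt = -(X^TX\Omega_j - X^T\mu_j)$, shift by the critical point to obtain $dw_j/dt = -X^TX\,w_j$, diagonalize via the SVD $X^TX = V\Sigma^2 V^T$, and read off the modal solution $w_j(t)=\sum_i c_i v_i e^{-\sigma_i^2 t}$ together with its rank-deficient variant. If anything, you are slightly more explicit than the paper about the stochastic-approximation hypotheses and the freezing of $\mu_j$; one small quibble is that the ODE is already affine, so ``linearize'' is really just a translation to the critical point.
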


This statement claims the dual gradient flow moves faster in the more relevant directions, i.e. where data vary more. Indeed, the trajectories start at the initial position of the prototypes (the constants in Eq. \ref{eq:general-evo} are the associated coordinates in the standard framework rotated by $V$) and evolve along the right singular vectors, faster in the directions of more variance in the data. It implies a faster rate of convergence because it is dictated by the data content, as already observed in the numerical experiments (see Fig. \ref{fig:loss}).

\subsection{Dynamics of the dual layers}
For the basic layer it holds:
\begin{equation}
    W_1^j - W_{1,\textrm{crit}}^j = l e^{-t}
\end{equation}
where $l \in \mathbb{R}^d$ is a vector of constants. Therefore, $W_1^j$ tends asymptotically to $\mu_j$, by moving in $\mathbb{R}^d$. However, being $\mu_j$ a linear combinations of the columns of $X$, it can be deduced that, after a transient period, the neuron weight vectors tend to the range (column space) of $X$, say $R(X)$, i.e.:
\begin{equation}
    \forall j, \forall t > t_0 \qquad W_1^j \in R(X) = \textrm{span}(u_1, u_2, \dots, u_r)
\end{equation}
where $t_0$ is a certain instant of time and $r=\textrm{rank}(X)=\min \{d,n\}$ under the assumption of samples independently drawn from the same distribution, which prevents from the presence of collinearities in data. It follows:
\begin{equation}
    W_1^j = W_{1,\textrm{crit}}^j + Uc e^{-t}
\end{equation}
where $l \in \mathbb{R}^d$ is another vector of constants. Then $W_1^j$ can be considered as the output of a linear transformation represented by the matrix $X$, i.e. $W_1^j = X p$, being $p \in \mathbb{R}^n$ its preimage. Hence, $(W_1^j)^T= p^T X^T$, which shows the duality. Indeed, it represents a network whose input is $X^T$, and the output $(W_1^j)^T$ and parameter weight vector $p^T$ are the interchange of the corresponding ones in the base network. Notice, however, that the weight vector in the dual network corresponds only through a linear transformation, that is, by means of the preimage.
Under the second duality assumption $XX^T = I_d$, it holds:
\begin{eqnarray}
    & XX^T = U \Sigma V^T (U \Sigma V^T)^T = U \Sigma \Sigma^T U^T = I_d \nonumber \\
    &\implies U \Sigma \Sigma^T = U \nonumber \\
    &\implies
    \begin{dcases}
    U I_d = U \quad & d \leq n \\
    U 
    \begin{bmatrix}
    I_n & 0_{n,d-n} \\
    0_{d-n,n} & 0_{d-n,d-n}
    \end{bmatrix} = U \quad & d > n
    \end{dcases}
\end{eqnarray}
where $0_{r,s}$ is the zero matrix with $r$ rows and $s$ columns. Therefore, this assumption implies there are $d$ singular values all equal to $1$ or $-1$. In case of remaining singular values, they are all null and of cardinality $d-n$. 
For the dual layer, under the second duality assumption, in the case of singular values all equal to $-1$ or $0$, it follows:
\begin{equation}
    \Omega_j - \Omega_{j,\textrm{crit}} = 
    \begin{cases}
    Vqe^{-t} \quad & d \geq n \\
    Vq 
    \begin{bmatrix}
    e^{-t} 1_d \\
    1_{n-d}
    \end{bmatrix} \quad & d < n
    \end{cases}
\end{equation}
where $q \in \mathbb{R}^n$. Therefore, $\Omega_j$ tends asymptotically to $\Omega_{j,\textrm{crit}}$, by moving in $\mathbb{R}^n$. Hence, it can be deduced that, after a transient period, the neuron weight vectors tend to the range (column space) of $X$, say $R(X^T)$, i.e.:
\begin{equation}
    \forall j, \forall t > t_0 \qquad \Omega^j \in R(X^T) = \textrm{span}(v_1, v_2, \dots, v_r)
\end{equation}
where $t_0$ is a certain instant of time and $r=\textrm{rank}(X)=\min \{d,n\}$ under the same assumption of noncollinear data.

Resuming, the base and dual gradient flows, under the two duality assumptions, except for the presence of centers, are given by:
\begin{eqnarray} \label{eq:gradient-flows}
    & \begin{cases}
    w_1^j = U c e^{-t} \\
    \omega_j = V q e^{-t}
    \end{cases} \implies \nonumber \\
    & X \omega_j = X V q e^{-t} \implies \nonumber 
    X\omega_j = U\Sigma q e^{-t} \implies \nonumber\\ 
    & X\omega_j = U c e^{-t} \implies \nonumber 
    w_1^j = X\omega_j
\end{eqnarray}
because $XV=U\Sigma$ from the SVD of $X$ and $c=\Sigma q$ for the arbitrariness of the constants. This result claims the fact that the base flow directly estimates the prototype, while the dual flow estimates its preimage. This confirms the duality of the two layers from the dynamical point of view and proves the following theorem.

\begin{theorem}[Dynamical duality]
Under the two assumptions of \ref{thm:duality}, the two networks are dynamically equivalent. In particular, the base gradient flow evolves in $R(X)$ and the dual gradient flow evolves in $R(X^T)$.
\end{theorem}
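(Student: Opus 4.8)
The plan is to assemble the dynamical descriptions of the two flows already derived in Section~\ref{sec:flows} and then exhibit the explicit linear map that intertwines them. First I would recall from the base-layer analysis that, after centering the critical point at the origin, $w_1^j(t)=l\,e^{-t}$ with $l\in\mathbb{R}^d$, hence $W_1^j(t)=\mu_j+l\,e^{-t}\to\mu_j=\mathbb{E}[x_i]$ over the $j$-th Voronoi set. Since $\mu_j$ is an average of columns of $X$, it lies in $R(X)$; using that noncollinearity of the samples gives $r=\operatorname{rank}(X)=\min\{d,n\}$ and $R(X)=\operatorname{span}(u_1,\dots,u_r)$, I would project the initial condition onto $R(X)$ and write the relevant part of the trajectory as $w_1^j=Uc\,e^{-t}$, so that $W_1^j\in R(X)$ for $t>t_0$.

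Second, I would invoke Theorem~\ref{thm:dual-pca}: the recentered dual flow evolves as $w_j(t)=\sum_i c_i v_i e^{-\sigma_i^2 t}$ together with the non-decaying components along the nullspace of $X$ (the ``centers''), so it lives in $R(X^T)=\operatorname{span}(v_1,\dots,v_r)$, giving $\Omega_j\in R(X^T)$. Under the second duality assumption $XX^T=I_d$, the SVD computation already carried out shows that all nonzero singular values equal $1$ (with the remaining ones, if any, null), so the time constants collapse to a single value and the recentered dual trajectory becomes $\omega_j-\Omega_{j,\mathrm{crit}}=Vq\,e^{-t}$ (up to centers), with $q\in\mathbb{R}^n$.

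Third --- the crux --- I would verify the intertwining identity $w_1^j=X\omega_j$. Applying $X$ to the dual trajectory and using the SVD relation $XV=U\Sigma$ yields $X\omega_j=U\Sigma q\,e^{-t}$; since the constants are free, the choice $c=\Sigma q$ matches this with the base trajectory $Uc\,e^{-t}$. For the equilibria, $X\,\Omega_{j,\mathrm{crit}}=XX^+\mu_j=\mu_j=W_{1,\mathrm{crit}}^j$ (because $XX^+$ fixes $R(X)$ and $\mu_j\in R(X)$), so the un-recentered flows correspond as well. Thus the linear map $\omega_j\mapsto X\omega_j$ carries the dual flow onto the base flow: the base flow directly estimates the prototype while the dual flow estimates its preimage under $X$, which is exactly the dynamical form of the architectural duality of Theorem~\ref{thm:duality}. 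Together with the first two steps, this gives both the equivalence and the claimed invariant subspaces $R(X)$ and $R(X^T)$.

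The main obstacle is the bookkeeping around the degenerate directions. When $d>n$ one must separate the vanishing singular values (which contribute nothing to the base flow once the initial condition is projected onto $R(X)$), and when $n>d$ one must carry the $n-d$ ``center'' modes of $X^TX$ that span the nullspace of $X$, checking that $X$ annihilates exactly those modes so the identity $w_1^j=X\omega_j$ survives with no leftover terms. One should also be careful to phrase ``dynamically equivalent'' precisely --- namely that the two recentered flows are conjugate via the (generally non-invertible) linear map $X$, with $X^+$ acting in the reverse direction on the relevant subspaces --- rather than claiming a literal trajectory-by-trajectory identity in the ambient spaces $\mathbb{R}^d$ and $\mathbb{R}^n$.
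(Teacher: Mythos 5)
Your proposal is correct and follows essentially the same route as the paper: it assembles the previously derived base and dual flow solutions, uses the second duality assumption to reduce the singular values to unity, and establishes the intertwining identity $w_1^j = X\omega_j$ via $XV = U\Sigma$ with $c = \Sigma q$. The extra checks you add (the equilibria correspondence $X\Omega_{j,\mathrm{crit}} = \mu_j$ and the careful handling of the center modes) are consistent with, and slightly more explicit than, the paper's own argument.
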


More in general, the fact that the prototypes are straightly computed in the base network implies a more rigid dynamics of its gradient flow. On the contrary, the presence of the singular values in the exponentials of the dual gradient flow originate from the fixed transformation (matrix $X$) used for the prototype estimation (see Eq. \ref{eq:gradient-flows}). They are exploited for a better dynamics, because they are suited to the statistical characteristics of the training set, as discussed before.
Both flows estimate the centroids of the Voronoi sets, like the centroid estimation step of the Lloyd algorithm, but the linear layers allow the use of gradient flows and do not require the a priori knowledge of the number of prototypes (see the discussion on pruning in Section \ref{sec:analysis}). However, the dual flow is an iterative least squares solution, while the base flow does the same only implicitly.
In the case $d>n$, $\textrm{rank}(X) = \textrm{rank}(X^T)=n$, and the base gradient flow stays in $\mathbb{R}^d$, but tends to lie on the $n$-dimensional subspace $R(X)$. Instead, the dual gradient flow is $n$-dimensional and always evolves in the $n$-dimensional subspace $R(X^T)$. Figure \ref{fig:flows-subspaces} shows both flows and the associated subspaces for the case $n=2$ and $d=3$. The following lemma describes the relationship between the two subspaces.

\begin{figure}
    \centering
    \includegraphics[scale=0.4,trim={0cm 3cm 0cm 1cm},clip]{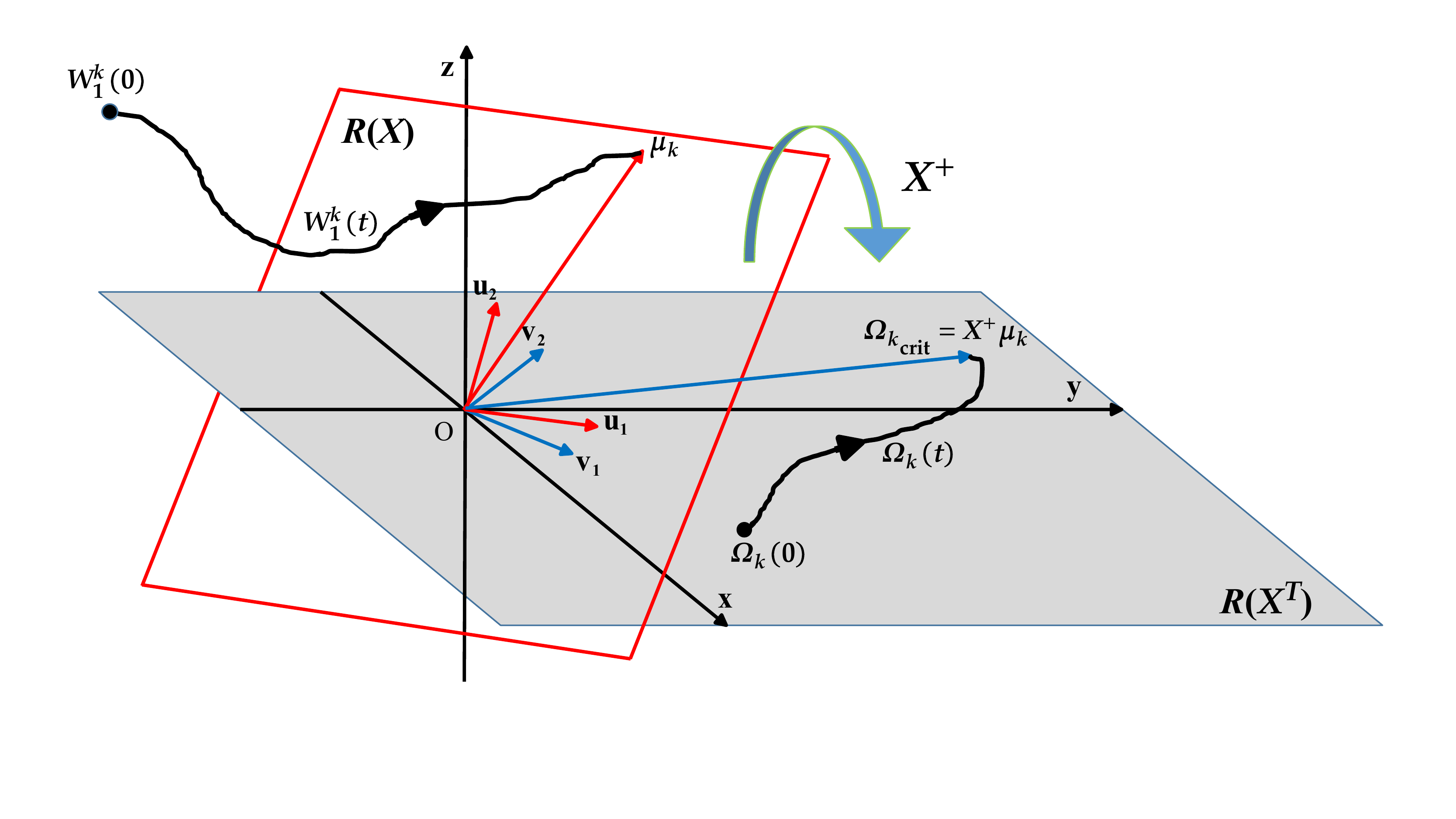}
    \caption{Gradient flows and subspaces ($n=2$ and $d=3$).}
    \label{fig:flows-subspaces}
\end{figure}

\begin{lemma}[Range transformation]
The subspace $R(X)$ is the transformation by $X$ of the subspace $R(X^T)$.
\end{lemma}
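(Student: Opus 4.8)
The plan is to prove the set equality $X\bigl(R(X^T)\bigr) = R(X)$, where $X\bigl(R(X^T)\bigr) := \{\, X v : v \in R(X^T)\,\}$, by leaning on the singular value decomposition $X = U\Sigma V^T$ already set up around Eq.~\eqref{eq:eigen-decomp}. Recall that, under the non-collinearity assumption, $r := \textrm{rank}(X) = \min\{d,n\}$, the left singular vectors give $R(X) = \textrm{span}(u_1,\dots,u_r) \subseteq \mathbb{R}^d$, and the right singular vectors give $R(X^T) = \textrm{span}(v_1,\dots,v_r) \subseteq \mathbb{R}^n$.

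First I would note the easy inclusion $X\bigl(R(X^T)\bigr) \subseteq R(X)$: every vector of the form $Xv$ is by definition in the column space of $X$. For the reverse inclusion I would compute the action of $X$ on a spanning set of $R(X^T)$. Since $V$ is orthogonal, $V^T v_i$ is the $i$-th standard basis vector of $\mathbb{R}^n$, so for $i = 1,\dots,r$ one gets $X v_i = U\Sigma V^T v_i = \sigma_i\, u_i$ with $\sigma_i > 0$. Consequently $X\bigl(R(X^T)\bigr) = \textrm{span}(\sigma_1 u_1,\dots,\sigma_r u_r) = \textrm{span}(u_1,\dots,u_r) = R(X)$, the penultimate equality using that all $\sigma_i$ are nonzero. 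Combining the two inclusions closes the argument.

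As an SVD-free alternative worth recording, I would observe that $R(X^T) = \ker(X)^\perp$, so the restriction $X|_{R(X^T)}$ has trivial kernel and is therefore an injective linear map from the $r$-dimensional space $R(X^T)$ into the $r$-dimensional space $R(X)$; a dimension count forces it to be onto $R(X)$. Equivalently and constructively: given $w \in R(X)$, write $w = Xz$ and split $z = z_1 + z_0$ with $z_1 \in R(X^T)$ and $z_0 \in \ker(X)$, whence $w = X z_1 \in X\bigl(R(X^T)\bigr)$.

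There is no real obstacle here — the statement is a linear-algebra identity. The only points to be careful about are not to confuse the two ambient spaces ($R(X) \subseteq \mathbb{R}^d$ versus $R(X^T) \subseteq \mathbb{R}^n$), and to invoke the non-collinearity hypothesis so that $\textrm{rank}(X) = \min\{d,n\}$ and the relevant singular values are strictly positive, ensuring the listed singular vectors actually span the respective ranges. It is worth closing with the geometric reading, tying back to Figure~\ref{fig:flows-subspaces}: $X$ carries the $n$-space $R(X^T)$, where the dual flow lives, isomorphically onto the subspace $R(X)\subseteq\mathbb{R}^d$ where the base flow lives, which is exactly the correspondence $w_1^j = X\omega_j$ obtained above.
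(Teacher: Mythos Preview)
Your argument is correct and in fact more complete than the paper's. The paper's proof simply writes out the set descriptions $R(X)=\{z=Xu\}$ and $R(X^T)=\{y=X^Tx\}$ and then asserts $X R(X^T)=\{Xy:\ y=X^Tx\}=R(X)$, backing this only with the remark that ``multiplying $X$ by a vector yields a vector in $R(X)$''---which strictly speaking justifies only the inclusion $X(R(X^T))\subseteq R(X)$, the direction you flag as easy. You instead go through the SVD $X=U\Sigma V^T$ and compute $Xv_i=\sigma_i u_i$ for $i\le r$ with $\sigma_i>0$, which nails the reverse inclusion cleanly; your SVD-free alternative via $R(X^T)=\ker(X)^\perp$ and a dimension count is also a perfectly standard and self-contained route. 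What your approach buys is rigor on the nontrivial direction and an explicit connection to the singular-vector bases already in play in the surrounding analysis; what the paper's buys is brevity, though at the cost of leaving the surjectivity step implicit.
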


\begin{proof}
The two subspaces are the range (column space) of the two matrices $X$ and $X^T$:
\begin{equation}
    R(X) = \{z:\ z=Xu \quad \text{for a certain } u\}
\end{equation}
\begin{equation}
    R(X^T) = \{y:\ y=X^Tx \quad \text{for a certain } x\}
\end{equation}
Then:
\begin{equation}
    X R(X^T) = \{u=Xy:\ y=X^Tx \quad \text{for a certain } x\} = R(X)
\end{equation}
More in general, multiplying $X$ by a vector yields a vector in $R(X)$.
\end{proof}
All vectors in $R(X^T)$ are transformed by $X$ in the corresponding quantities in $R(X)$. In particular:
\begin{equation}
    u_i = \frac{1}{\sigma_i} X v_i \qquad \forall i = 1, \dots, n
\end{equation}
\begin{equation}
    \mu_j = X \Omega_{j,\textrm{crit}}
\end{equation}
\begin{figure}
    \centering
    \includegraphics[trim= 10 22 10 25, clip, width=0.7\columnwidth]{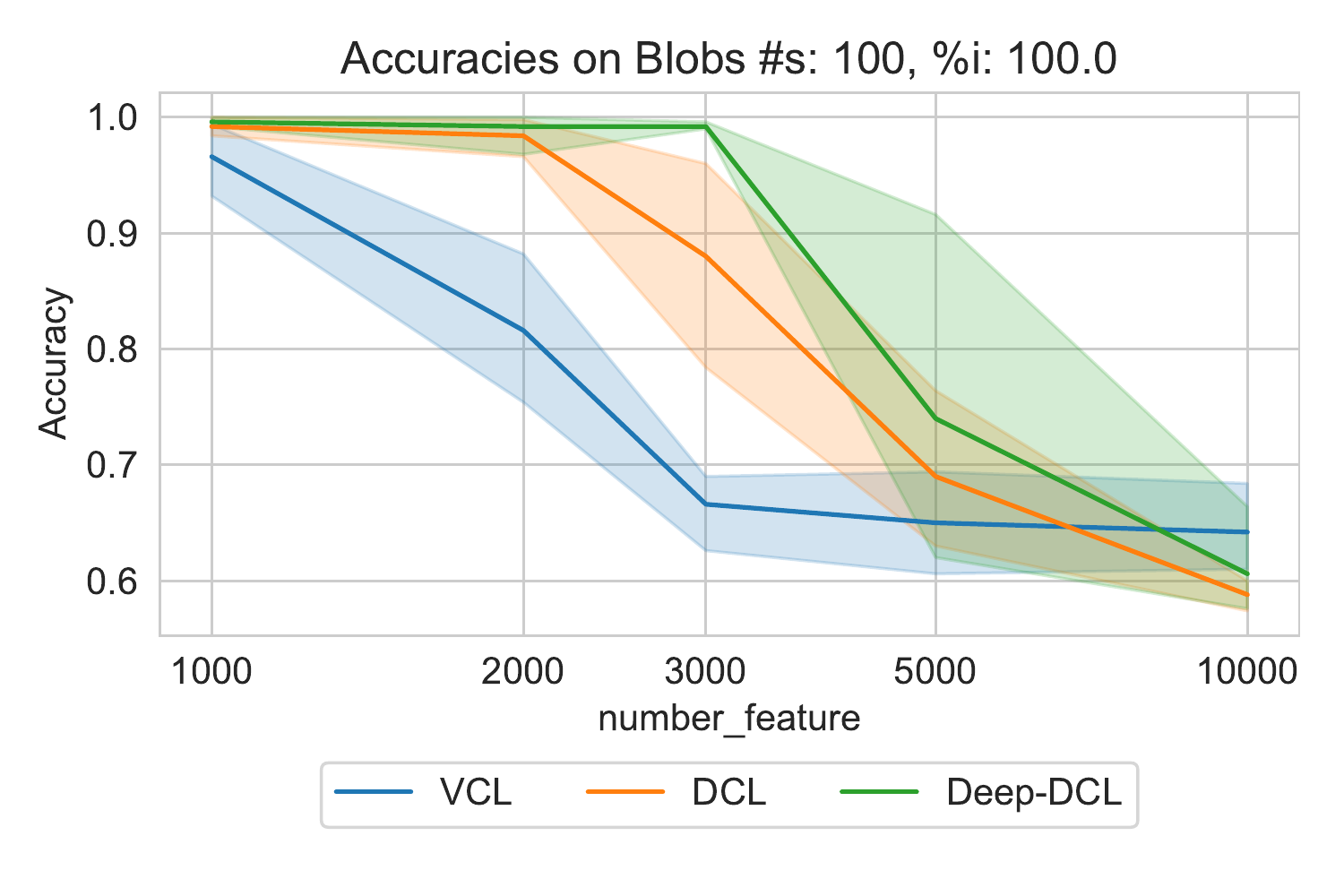}
    \caption{High-dimensional simulations: accuracy as a function of the dimensionality of the problem. Error bands correspond to the standard error of the mean.}
    \label{fig:high-dim}
\end{figure}
This analysis proves the following theorem. 
\begin{theorem}[Fundamental on gradient flows, part I]
In the case $d>n$, the base gradient flow represents the temporal law of a $d$-dimensional vector tending to an $n$-dimensional subspace containing the solution. Instead, the dual gradient flow always remains in an $n$-dimensional subspace containing the solution. Then, the least squares transformation $X^+$ yields a new approach, the dual one, which is not influenced by $d$, i.e. the dimensionality of the input data.
\end{theorem}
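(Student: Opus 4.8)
The plan is to assemble the theorem directly from the explicit flow solutions already obtained in this section, reading off, for each of the two trajectories, the dimension of the space it lives in and the dimension of the affine subspace it converges to, and then identifying the equilibrium with the (least-squares) solution.

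First I would treat the base flow. From $\frac{dw_1^j}{dt} = -w_1^j$ with $w_1^j = W_1^j - W_{1,\textrm{crit}}^j$ and $W_{1,\textrm{crit}}^j = \mu_j$, the solution is $W_1^j(t) = \mu_j + l\,e^{-t}$, where $l = W_1^j(0) - \mu_j$ is a generic vector of $\mathbb{R}^d$ fixed by the initialization (e.g. Glorot). Since $\mu_j$ is an average of columns of $X$, we have $\mu_j \in R(X)$, and under the non-collinearity assumption already invoked, $\dim R(X) = \textrm{rank}(X) = \min\{d,n\} = n$ when $d>n$. Hence the base trajectory is a genuinely $d$-dimensional curve whose distance to the $n$-dimensional subspace $R(X)$ decays like $e^{-t}$ and whose limit — the solution $\mu_j$ — lies in that subspace. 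This gives the first assertion.

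Next I would treat the dual flow. Here the adapted weights $\Omega_j$ are vectors of $\mathbb{R}^n$ by construction (the dual network has $n$ input units), and the ODE $\frac{dw_j}{dt} = -X^TX\,w_j$ with $w_j = \Omega_j - \Omega_{j,\textrm{crit}}$ has solution $w_j(t) = \sum_{i=1}^{n} c_i v_i e^{-\sigma_i^2 t}$, lying in $\textrm{span}(v_1,\dots,v_n) = R(X^T)$, which for $d>n$ is $n$-dimensional. The equilibrium $\Omega_{j,\textrm{crit}} = X^+\mu_j = (X^TX)^{-1}X^T\mu_j$ also lies in $R(X^T)$. Thus the dual trajectory never leaves the $n$-dimensional subspace $R(X^T)$, which already contains the solution, proving the second assertion. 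Finally, for the $d$-independence claim I would invoke the Range transformation lemma: $X\,R(X^T) = R(X)$ and in particular $\mu_j = X\,\Omega_{j,\textrm{crit}}$, so the Voronoi centroid is recovered from the dual solution by a single application of $X$; meanwhile every ingredient of the dual computation — the state space $\mathbb{R}^n$, the governing matrix $X^TX \in \mathbb{R}^{n\times n}$, the least-squares solver $X^+ \in \mathbb{R}^{n\times d}$ applied to the overdetermined system $X\Omega_j = \mu_j$, and the equilibrium $X^+\mu_j$ — has a size determined solely by $n$, while $d$ enters only through the numerical entries of $X$ and not through the dimension of the dynamical system.

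The main obstacle I anticipate is not computational but one of precise bookkeeping: stating cleanly what ``not influenced by $d$'' means, carrying the rank/non-collinearity hypothesis explicitly so that $\dim R(X) = \dim R(X^T) = n$ genuinely holds, and identifying the equilibrium with ``the solution'' in the correct sense — the unique least-squares solution of the overdetermined system in the regime $d>n$, as opposed to the minimum-norm solution relevant in the complementary regime.
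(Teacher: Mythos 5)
Your proposal is correct and follows essentially the same route as the paper, which proves this theorem simply by pointing back to the explicit flow solutions $W_1^j = W_{1,\textrm{crit}}^j + Uce^{-t}$ (tending to $R(X)=\textrm{span}(u_1,\dots,u_n)$) and $\Omega_j = \Omega_{j,\textrm{crit}} + Vqe^{-t}$ (confined to $R(X^T)$), together with the Range transformation lemma and the identities $u_i = \frac{1}{\sigma_i}Xv_i$ and $\mu_j = X\Omega_{j,\textrm{crit}}$. Your reconstruction is, if anything, more explicit than the paper about the rank bookkeeping and about what ``not influenced by $d$'' means.
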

This assertion is the basis of the claim the dual network is a novel and very promising technique for high-dimensional clustering. However, it must be considered that the underlying theory is only approximated and gives an average behavior. Fig. \ref{fig:high-dim} shows a simulation comparing the performances of VCL, DCL, and a deep variant of the DCL model in tackling high-dimensional problems with an increasing number of features (see Appendix \ref{app:2} for simulation details). The simulations show how the dual methods are more capable to deal with high-dimensional data as their accuracy remains near $100\%$ until $2000-3000$ features. Obviously, the deep version of DCL (deep-DCL) yields the best accuracy because it exploits the nonlinear transformation of the additional layers.

In the case $n \geq d$, instead, the two subspaces have dimension equal to $d$. Then, they coincide with the feature space, eliminating any difference between the two gradient flows. In reality, for the dual flow, there are $n-d$ remaining modes with zero eigenvalue (centers) which are meaningless, because they only add $n-d$ constant vectors (the right singular vectors of $X$) which can be eliminated by adding a bias to each output neuron of the dual layer.

\begin{theorem}[Fundamental on gradient flows, part II]
In the case $d \leq n$, both gradient flows lie in the same (feature) space, the only difference being the fact that the dual gradient flow temporal law is driven by the variances of the input data.
\end{theorem}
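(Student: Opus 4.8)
The plan is to specialize the base and dual gradient flows already derived in Section~\ref{sec:flows} to the regime $d\le n$ and then to compare the two \emph{prototype} trajectories inside the common feature space $\mathbb{R}^d$.

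First I would record what the rank hypothesis buys here. Under the standing assumption that the samples are drawn independently from the same distribution (hence noncollinear), $\mathrm{rank}(X)=\min\{d,n\}=d$, so $\Sigma$ carries $d$ strictly positive singular values $\sigma_1\ge\cdots\ge\sigma_d>0$ and $R(X)=\mathrm{span}(u_1,\dots,u_d)=\mathbb{R}^d$. Consequently the ``target subspace'' that the base weight vector was shown to approach is now the whole feature space: since $U$ is a full orthogonal $d\times d$ matrix, $W_1^j(t)=\mu_j+Uc\,e^{-t}$ holds exactly for all $t$ with no transient, and, by the Range transformation lemma, the dual prototype $y_j=X\Omega_j$ also lies in $R(X)=\mathbb{R}^d$. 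This already yields the first half of the claim --- both flows live in the same (feature) space.

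Next I would transport the dual weight flow through $X$. Starting from $w_j=\Omega_j-\Omega_{j,\mathrm{crit}}=\sum_{i=1}^{d}c_i v_i e^{-\sigma_i^2 t}+\sum_{i=d+1}^{n}c_i v_i$, where the second sum collects the $n-d$ zero-eigenvalue (center) modes of $X^{T}X$, and using $Xv_i=\sigma_i u_i$ for $i\le d$ together with $Xv_i=0$ for $i>d$, one obtains
\[
    y_j(t)-\mu_j \;=\; X w_j(t) \;=\; \sum_{i=1}^{d}\big(c_i\sigma_i\big)\,u_i\,e^{-\sigma_i^2 t},
\]
so the center modes are \emph{annihilated} by $X$: they contribute nothing to the prototype and, as observed before the theorem, are removable by a per-neuron output bias. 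On the base side the flow derived earlier reads $W_1^j(t)-\mu_j=Uc\,e^{-t}=\sum_{i=1}^{d}b_i\,u_i\,e^{-t}$. Hence both prototype trajectories are expansions in the \emph{same} orthonormal basis $\{u_1,\dots,u_d\}$ of $\mathbb{R}^d$, and both converge to the Voronoi centroid $\mu_j$; the only difference is the time constant along the $i$-th principal axis --- uniformly $1$ for the base flow versus $1/\sigma_i^2$ for the dual flow, and $\sigma_i^2$ is precisely the variance of the data along $u_i$ by the eigendecomposition in Eq.~\ref{eq:eigen-decomp} (cf.\ Theorem~\ref{thm:dual-pca}). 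This is exactly the asserted statement.

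The matrix--vector identities ($Xv_i=\sigma_i u_i$, the change of basis $Uc=\sum_i b_i u_i$) are routine; the only point requiring a little care is the treatment of the $n-d$ nullspace modes of $X^{T}X$ --- one must verify, as the display above does, that they are genuinely invisible in prototype space and record that a bias per output neuron eliminates them, so that the honest comparison is between the two residual $d$-dimensional dynamics. I expect this bookkeeping of the center contribution, rather than any analytic difficulty, to be the main obstacle.
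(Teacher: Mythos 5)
Your proof is correct and follows essentially the same route as the paper: specializing the SVD-based flow analysis to $\mathrm{rank}(X)=d$, mapping the dual weight flow through $X$ so that both prototype trajectories become expansions in the same basis $\{u_1,\dots,u_d\}$ of $\mathbb{R}^d$ converging to $\mu_j$, with time constants $1$ for the base flow versus $1/\sigma_i^2$ for the dual flow. Your explicit verification that the $n-d$ center modes are annihilated by $X$ (hence invisible in prototype space) is a slightly cleaner treatment than the paper's remark that they can be absorbed into output biases, and your index range $\sum_{i=1}^{d}$ for the decaying modes corrects what appears to be an indexing typo in Eq.~(\ref{eq:general-evo}).
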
 

\subsection{The Voronoi set estimation}

Consider the matrix $X^T Y^T \in \mathbb{R}^{n \times j}$, which contains all the inner products between data and prototypes. From the architecture and notation of the dual layer, it follows  $Y=\Omega X^T$, which yields:
\begin{equation}
    X^T Y^T = X^T X \Omega^T = G\Omega^T
\end{equation}
where the sample autocorrelation data matrix $G$ is the Gram matrix.
The Euclidean distance matrix $\textrm{edm}(X,Y) \in \mathbb{R}^{n \times j}$, which contains the squared distances between the columns of $X$ and $Y$, i.e. between data and prototypes, is given by \cite{dokmanic2015euclidean}:
\begin{equation}
    \textrm{edm}(X,Y) = \textrm{diag}(X^T X) 1_j^T - 2 X^T Y^T + 1_n \textrm{diag}(YY^T)^T
\end{equation}
where $\textrm{diag}(A)$ is a column vector containing the diagonal entries of $A$ and $1_r$ is the $r$-dimensional column vector of all ones. It follows:
\begin{equation}
    \textrm{edm}(X,Y)=\textrm{diag}(G) 1_j^T - 2G \Omega^T + 1_n \textrm{diag}(YY^T)^T
\end{equation}
and, considering that $YY^T =  \Omega X^T (\Omega X^T)^T = \Omega G \Omega^T$, it holds:
\begin{eqnarray} \label{eq:edm}
    \textrm{edm}(X,Y) &=& f(G,\Omega) = \\
    &=& 1_n \textrm{diag}(\Omega G \Omega^T)^T - 2G \Omega^T + \textrm{diag}(G) 1_j^T \nonumber
\end{eqnarray}
as a quadratic function of the dual weights.
This function allows the straight computation of the edm from the estimated weights, which is necessary in order to evaluate the Voronoi sets of the prototypes for the quantization loss.

% \begin{theorem}[Rank of $\textrm{edm}$]
% In the case $d \geq n$, the rank of the $\textrm{edm}$ is at most $k+2$, where $k$ is the number of prototypes.
% In the case $d<n$, the rank of the $\textrm{edm}$ is at most $\min \{d,k\} +2$, where $d$ is the number of features.
% \end{theorem}

% \begin{proof}
% Considering that:
% \[
% \textrm{rank}(AB) \leq \min \{\textrm{rank}(A),\textrm{rank}(B) \}
% \]
% it follows, in the case $d>n$, $\textrm{rank}(G\Omega^T) \leq \min \{n,k\} = k$ because, in clustering, $k \ll n$. The other two terms in Eq. \ref{eq:edm} are outer products, which have rank one. The fact that the rank of a sum of matrices cannot exceed the sum of the ranks of the summands proves the first part of the theorem. Instead, in the case $d \leq n$, $\textrm{rank}(G\Omega^T) \leq \min \{d,k\}$. In particular, for $d=n$, $\textrm{rank}(G\Omega^T) \leq k$.
% \end{proof}
% This theorem states that, in every case, the rank of the $\textrm{edm}$ is independent of the number of samples, but depends, at worst, to the number of prototypes, which is, in general, very lower. In the case $d<k$, this theorem coincides with Theorem 1 in \cite{dokmanic2015euclidean}, which is only valid for the $\textrm{edm}$ of a single matrix.

\section{Conclusion}
This work opens a novel field in neural network research where unsupervised gradient-based learning joins competitive learning.
Two novel layers (VCL and DCL) suitable for unsupervised deep learning applications are introduced. Despite VCL is just an adaptation of a standard competitive layer for deep neural architectures, DCL represents a completely novel approach. The relationship between the two layers has been extensively analyzed and their equivalence in terms of architecture has been proven. Nonetheless, the advantages of the dual approach justify its employment. Unlike all other clustering techniques, the parameters of DCL evolve in a $n$-dimensional submanifold which does not depend on the number of features $d$ as the layer is trained on the transposed input matrix. As a result, the dual approach is natively suitable for tackling high-dimensional problems. The flexibility and the power of the approach pave the way towards more advanced and challenging learning tasks; an upcoming paper will compare DCL on renowned benchmarks against state-of-the-art clustering algorithms. Further extensions of this approach may include topological nonstationary clustering \cite{randazzo2018nonstationary}, hierarchical clustering \cite{ghng,cirrincione2020gh}, core set discovery \cite{barbiero2020uncovering,ciravegna2019discovering}, incremental and attention-based approaches, or the integration within complex architectures such as VAEs and GANs, and will be studied in the future.

% This work sketches a novel interpretation of topological competitive learning using backpropagation. The foundation of a new theory is provided bridging two research fields which are usually thought as disjointed: gradient-based learning and unsupervised competitive learning. This theory may represent the basis for a comprehensive reinterpretation of supervised and unsupervised learning with neural networks. 
% Besides, as outlined in the experimental section, the framework can be easily extended to integrate complex topological structures and relationships among prototypes.
% The two novel competitive layers presented in this work represent the first steps towards the integration of competitive and topological learning with deep neural architectures, outlining the power and flexibility of the approach paving the way towards more advanced and challenging learning tasks such as: topological nonstationary clustering \cite{randazzo2018nonstationary}, hierarchical clustering \cite{ghng,cirrincione2020gh}, core set discovery \cite{barbiero2020uncovering,ciravegna2019discovering}, incremental and attention-based approaches, or multi-objective optimization of a latent space with topological constraints.

\printbibliography

@article{barbiero_cola, title={pietrobarbiero/cola: Absolutno}, DOI={10.5281/zenodo.4006537}, abstractNote={Competitive layers for deep learning}, publisher={Zenodo}, author={Pietro Barbiero}, year={2020}, month={Aug}}

@inproceedings{glorot2010understanding,
  title={Understanding the difficulty of training deep feedforward neural networks},
  author={Glorot, Xavier and Bengio, Yoshua},
  booktitle={Proceedings of the thirteenth international conference on artificial intelligence and statistics},
  pages={249--256},
  year={2010}
}

@article{demartines1997curvilinear,
  title={Curvilinear component analysis: A self-organizing neural network for nonlinear mapping of data sets},
  author={Demartines, Pierre and H{\'e}rault, Jeanny},
  journal={IEEE Transactions on neural networks},
  volume={8},
  number={1},
  pages={148--154},
  year={1997},
  publisher={IEEE}
}

@article{cirrincione2018growing,
  title={The growing curvilinear component analysis (GCCA) neural network},
  author={Cirrincione, Giansalvo and Randazzo, Vincenzo and Pasero, Eros},
  journal={Neural Networks},
  volume={103},
  pages={108--117},
  year={2018},
  publisher={Elsevier}
}

@incollection{cirrincione2018growing2,
  title={Growing Curvilinear Component Analysis (GCCA) for dimensionality reduction of nonstationary data},
  author={Cirrincione, Giansalvo and Randazzo, Vincenzo and Pasero, Eros},
  booktitle={Multidisciplinary Approaches to Neural Computing},
  pages={151--160},
  year={2018},
  publisher={Springer}
}

@inproceedings{barbiero2017neural,
  title={Neural biclustering in gene expression analysis},
  author={Barbiero, P and Bertotti, A and Ciravegna, G and Cirrincione, G and Cirrincione, M and Piccolo, E},
  booktitle={International Conference on Computational Science and Computational Intelligence},
  year={2017}
}

@inproceedings{bhatia2004adaptive,
  title={Adaptive K-Means Clustering.},
  author={Bhatia, Sanjiv K and others},
  booktitle={FLAIRS conference},
  pages={695--699},
  year={2004}
}

@inproceedings{scholkopf1997kernel,
  title={Kernel principal component analysis},
  author={Sch{\"o}lkopf, Bernhard and Smola, Alexander and M{\"u}ller, Klaus-Robert},
  booktitle={International conference on artificial neural networks},
  pages={583--588},
  year={1997},
  organization={Springer}
}

@article{gray1984vector,
  title={Vector quantization},
  author={Gray, Robert},
  journal={IEEE Assp Magazine},
  volume={1},
  number={2},
  pages={4--29},
  year={1984},
  publisher={IEEE}
}

@article{sabin1986global,
  title={Global convergence and empirical consistency of the generalized Lloyd algorithm},
  author={Sabin, Michael and Gray, Robert},
  journal={IEEE Transactions on information theory},
  volume={32},
  number={2},
  pages={148--155},
  year={1986},
  publisher={IEEE}
}

@article{lloyd1982least,
  title={Least squares quantization in PCM},
  author={Lloyd, Stuart},
  journal={IEEE transactions on information theory},
  volume={28},
  number={2},
  pages={129--137},
  year={1982},
  publisher={IEEE}
}

@article{hornik1989multilayer,
  title={Multilayer feedforward networks are universal approximators.},
  author={Hornik, Kurt and Stinchcombe, Maxwell and White, Halbert and others},
  journal={Neural networks},
  volume={2},
  number={5},
  pages={359--366},
  year={1989}
}

@article{dokmanic2015euclidean,
  title={Euclidean distance matrices: essential theory, algorithms, and applications},
  author={Dokmanic, Ivan and Parhizkar, Reza and Ranieri, Juri and Vetterli, Martin},
  journal={IEEE Signal Processing Magazine},
  volume={32},
  number={6},
  pages={12--30},
  year={2015},
  publisher={IEEE}
}

@article{rumelhart1985feature,
  title={Feature discovery by competitive learning},
  author={Rumelhart, David E and Zipser, David},
  journal={Cognitive science},
  volume={9},
  number={1},
  pages={75--112},
  year={1985},
  publisher={Wiley Online Library}
}

@book{haykin2007neural,
  title={Neural networks: a comprehensive foundation},
  author={Haykin, Simon},
  year={2007},
  publisher={Prentice-Hall, Inc.}
}

@article{barlow1989unsupervised,
  title={Unsupervised learning},
  author={Barlow, Horace B},
  journal={Neural computation},
  volume={1},
  number={3},
  pages={295--311},
  year={1989},
  publisher={MIT Press}
}

@book{hebb2005organization,
  title={The organization of behavior: A neuropsychological theory},
  author={Hebb, Donald Olding},
  year={2005},
  publisher={Psychology Press}
}

@inproceedings{martinetz1993competitive,
  title={Competitive Hebbian learning rule forms perfectly topology preserving maps},
  author={Martinetz, Thomas},
  booktitle={International conference on artificial neural networks},
  pages={427--434},
  year={1993},
  organization={Springer}
}

@inproceedings{macqueen1967some,
  title={Some methods for classification and analysis of multivariate observations},
  author={MacQueen, James and others},
  booktitle={Proceedings of the fifth Berkeley symposium on mathematical statistics and probability},
  volume={1},
  number={14},
  pages={281--297},
  year={1967},
  organization={Oakland, CA, USA}
}

@article{hu2017learning,
  title={Learning discrete representations via information maximizing self-augmented training},
  author={Hu, Weihua and Miyato, Takeru and Tokui, Seiya and Matsumoto, Eiichi and Sugiyama, Masashi},
  journal={arXiv preprint arXiv:1702.08720},
  year={2017}
}

@misc{barbiero2020modeling,
    title={Modeling Generalization in Machine Learning: A Methodological and Computational Study},
    author={Pietro Barbiero and Giovanni Squillero and Alberto Tonda},
    year={2020},
    eprint={2006.15680},
    archivePrefix={arXiv}
}

@misc{barbiero2020uncovering,
    title={Uncovering Coresets for Classification With Multi-Objective Evolutionary Algorithms},
    author={Pietro Barbiero and Giovanni Squillero and Alberto Tonda},
    year={2020},
    eprint={2002.08645},
    archivePrefix={arXiv}
}

@incollection{ciravegna2019discovering,
  title={Discovering hierarchical neural archetype sets},
  author={Ciravegna, Gabriele and Barbiero, Pietro and Cirrincione, Giansalvo and Squillero, Giovanni and Tonda, Alberto},
  booktitle={Progresses in Artificial Intelligence and Neural Systems},
  pages={255--267},
  year={2019},
  publisher={Springer}
}

@article{altman2018curse,
  title={The curse (s) of dimensionality},
  author={Altman, Naomi and Krzywinski, Martin},
  journal={Nature Methods},
  volume={15},
  pages={399--400},
  year={2018}
}

@inproceedings{randazzo2018nonstationary,
  title={Nonstationary topological learning with bridges and convex polytopes: the G-EXIN neural network},
  author={Randazzo, Vincenzo and Cirrincione, Giansalvo and Ciravegna, Gabriele and Pasero, Eros},
  booktitle={2018 International Joint Conference on Neural Networks (IJCNN)},
  pages={1--6},
  year={2018},
  organization={IEEE}
}

@article{cirrincione2020gh,
  title={The GH-EXIN neural network for hierarchical clustering},
  author={Cirrincione, Giansalvo and Ciravegna, Gabriele and Barbiero, Pietro and Randazzo, Vincenzo and Pasero, Eros},
  journal={Neural Networks},
  volume={121},
  pages={57--73},
  year={2020},
  publisher={Elsevier}
}

@inproceedings{yang2016joint,
  title={Joint unsupervised learning of deep representations and image clusters},
  author={Yang, Jianwei and Parikh, Devi and Batra, Dhruv},
  booktitle={Proceedings of the IEEE Conference on Computer Vision and Pattern Recognition},
  pages={5147--5156},
  year={2016}
}

@inproceedings{chang2017deep,
  title={Deep adaptive image clustering},
  author={Chang, Jianlong and Wang, Lingfeng and Meng, Gaofeng and Xiang, Shiming and Pan, Chunhong},
  booktitle={Proceedings of the IEEE international conference on computer vision},
  pages={5879--5887},
  year={2017}
}

@article{hsu2017cnn,
  title={Cnn-based joint clustering and representation learning with feature drift compensation for large-scale image data},
  author={Hsu, Chih-Chung and Lin, Chia-Wen},
  journal={IEEE Transactions on Multimedia},
  volume={20},
  number={2},
  pages={421--429},
  year={2017},
  publisher={IEEE}
}

@inproceedings{goodfellow2014generative,
  title={Generative adversarial nets},
  author={Goodfellow, Ian and Pouget-Abadie, Jean and Mirza, Mehdi and Xu, Bing and Warde-Farley, David and Ozair, Sherjil and Courville, Aaron and Bengio, Yoshua},
  booktitle={Advances in neural information processing systems},
  pages={2672--2680},
  year={2014}
}

@article{kingma2013auto,
  title={Auto-encoding variational bayes},
  author={Kingma, Diederik P and Welling, Max},
  journal={arXiv preprint arXiv:1312.6114},
  year={2013}
}

@article{min2018survey,
  title={A survey of clustering with deep learning: From the perspective of network architecture},
  author={Min, Erxue and Guo, Xifeng and Liu, Qiang and Zhang, Gen and Cui, Jianjing and Long, Jun},
  journal={IEEE Access},
  volume={6},
  pages={39501--39514},
  year={2018},
  publisher={IEEE}
}

@article{kramer1991nonlinear,
  title={Nonlinear principal component analysis using autoassociative neural networks},
  author={Kramer, Mark A},
  journal={AIChE journal},
  volume={37},
  number={2},
  pages={233--243},
  year={1991},
  publisher={Wiley Online Library}
}

@inproceedings{krizhevsky2012imagenet,
  title={Imagenet classification with deep convolutional neural networks},
  author={Krizhevsky, Alex and Sutskever, Ilya and Hinton, Geoffrey E},
  booktitle={Advances in neural information processing systems},
  pages={1097--1105},
  year={2012}
}

@inproceedings{ester1996density,
  title={A density-based algorithm for discovering clusters in large spatial databases with noise.},
  author={Ester, Martin and Kriegel, Hans-Peter and Sander, J{\"o}rg and Xu, Xiaowei and others},
  booktitle={Kdd},
  volume={96},
  number={34},
  pages={226--231},
  year={1996}
}

@article{lecun1989backpropagation,
  title={Backpropagation applied to handwritten zip code recognition},
  author={LeCun, Yann and Boser, Bernhard and Denker, John S and Henderson, Donnie and Howard, Richard E and Hubbard, Wayne and Jackel, Lawrence D},
  journal={Neural computation},
  volume={1},
  number={4},
  pages={541--551},
  year={1989},
  publisher={MIT Press}
}

@article{pearson1901liii,
  title={LIII. On lines and planes of closest fit to systems of points in space},
  author={Pearson, Karl},
  journal={The London, Edinburgh, and Dublin Philosophical Magazine and Journal of Science},
  volume={2},
  number={11},
  pages={559--572},
  year={1901},
  publisher={Taylor \& Francis}
}

@book{mclachlan1988mixture,
author = {Mclachlan, G. and Basford, K.},
year = {1988},
month = {01},
pages = {},
title = {Mixture Models: Inference and Applications to Clustering},
volume = {38},
journal = {Applied Statistics},
}

@article{kohonen1982self,
  title={Self-organized formation of topologically correct feature maps},
  author={Kohonen, Teuvo},
  journal={Biological cybernetics},
  volume={43},
  number={1},
  pages={59--69},
  year={1982},
  publisher={Springer}
}

@inproceedings{fritzke1995growing,
  title={A growing neural gas network learns topologies},
  author={Fritzke, Bernd},
  booktitle={Advances in neural information processing systems},
  pages={625--632},
  year={1995}
}

@article{martinetz1991neural,
  title={A" neural-gas" network learns topologies},
  author={Martinetz, Thomas and Schulten, Klaus and others},
  year={1991},
  publisher={University of Illinois at Urbana-Champaign Champaign, IL}
}

@INPROCEEDINGS{chl,
  author={R. H. {White}},
  booktitle={IJCNN-91-Seattle International Joint Conference on Neural Networks}, 
  title={Competitive Hebbian learning}, 
  year={1991},
  volume={ii},
  number={},
  pages={949 vol.2-},}

@ARTICLE{ghng,
  author={E. J. {Palomo} and E. {López-Rubio}},
  journal={IEEE Transactions on Neural Networks and Learning Systems}, 
  title={The Growing Hierarchical Neural Gas Self-Organizing Neural Network}, 
  year={2017},
  volume={28},
  number={9},
  pages={2000-2009},}

@inproceedings{fritzke1997self,
  title={A self-organizing network that can follow non-stationary distributions},
  author={Fritzke, Bernd},
  booktitle={International conference on artificial neural networks},
  pages={613--618},
  year={1997},
  organization={Springer}
}

@inproceedings{guyon2003design,
  title={Design of experiments of the NIPS 2003 variable selection benchmark},
  author={Guyon, Isabelle},
  booktitle={NIPS 2003 workshop on feature extraction and feature selection},
  volume={253},
  year={2003}
}

@article{pedregosa2011scikit,
  title={Scikit-learn: Machine learning in Python},
  author={Pedregosa, Fabian and Varoquaux, Ga{\"e}l and Gramfort, Alexandre and Michel, Vincent and Thirion, Bertrand and Grisel, Olivier and Blondel, Mathieu and Prettenhofer, Peter and Weiss, Ron and Dubourg, Vincent and others},
  journal={the Journal of machine Learning research},
  volume={12},
  pages={2825--2830},
  year={2011},
  publisher={JMLR. org}
}

@inproceedings{abadi2016tensorflow,
  title={Tensorflow: A system for large-scale machine learning},
  author={Abadi, Martin and Barham, Paul and Chen, Jianmin and Chen, Zhifeng and Davis, Andy and Dean, Jeffrey and Devin, Matthieu and Ghemawat, Sanjay and Irving, Geoffrey and Isard, Michael and others},
  booktitle={12th $\{$USENIX$\}$ symposium on operating systems design and implementation ($\{$OSDI$\}$ 16)},
  pages={265--283},
  year={2016}
}
% \bibliographystyle{unsrt}  
% \bibliography{paper}

\clearpage
\newpage

\appendix

\section{Appendix}
\subsection{Topological and dynamical simulations}
\label{app:1}
In order to validate the theory and to analyze the differences of the two learning approaches, VCL and DCL are compared on three synthetic datasets containing clusters of different shapes and sizes.
Table \ref{tab:datasets} summarizes the main characteristics of each experiment.
The first dataset is composed of samples drawn from a two-dimensional Archimedean spiral (\textit{Spiral}). The second dataset consists of samples drawn from two half semicircles (\textit{Moons}). The last one is composed of two concentric circles (\textit{Circles}).
Each dataset is normalized by removing the mean and scaling to unit variance before fitting neural models. For all the experiments, the number of output units $k$ of the dual network is set to $30$. A grid-search optimization is conducted for tuning the hyperparameters. The
learning rate is set to $\epsilon=0.008$ for VCL and to $\epsilon=0.0008$ for DCL. Besides, for both networks, the number of epochs is equal to $\eta=400$ while the Lagrangian multiplier to $\lambda=0.01$.
For each dataset, both networks are trained $10$ times using different initialization seeds in order to statistically compare their performance.

% Please add the following required packages to your document preamble:
% \usepackage{booktabs}
% \usepackage{graphicx}
\begin{table}[!ht]
\renewcommand{\arraystretch}{1.5}
\centering
\caption{Synthetic datasets used for the simulations (s.v. stands for singular value).}
\label{tab:datasets}
\begin{center}
\begin{sc}
% \resizebox{\columnwidth}{!}{%
\begin{tabular}{@{}lrrrrr@{}}
\toprule
dataset & samples & features & clusters & max s.v. & min s.v. \\ \midrule
Spiral & 500 & 2 & 1 & 23.43 & 21.24 \\
Moons & 500 & 2 & 2 & 26.97 & 16.51 \\
Circles & 500 & 2 & 2 & 22.39 & 22.34 \\
\bottomrule
\end{tabular}%
% }
\end{sc}
\end{center}
\vskip -0.1in
\end{table}

\subsection{High-dimensional simulations}
\label{app:2}
The performance of the vanilla competitive layer and its dual network in tackling high dimensional problems is assessed through numerical experiments. Sure enough, standard distance-based algorithms generally suffer the well-known curse of dimensionality when dealing with high-dimensional data.
The MADELON algorithm proposed in \cite{guyon2003design} is used to generate high-dimensional datasets with an increasing number of features and fixed number of samples. This algorithm creates clusters of points normally distributed about vertices of an $n$-dimensional hypercube. An equal number of cluster and data is assigned to two different classes. Both the number of samples ($n_s$) and the dimensionality of the space ($n_f$) in which they are placed can be defined programmatically. 
More precisely, the number of samples is set to $n_s=100$ while the number of features ranges in $n_f \in [1000, 2000, 3000, 5000, 10000]$. 
The number of required centroids is fixed to one tenth the number of input samples. Three different networks are compared: VCL, DCL, and a deep variant of DCL with two hidden layers of 10 neurons each (deep-DCL). Results are averaged over $10$ repetitions on each dataset. Accuracy for each cluster is calculated by considering true positive those samples belonging to the class more represented and false positive the remaining data.

% Please add the following required packages to your document preamble:
% \usepackage{booktabs}
% \usepackage{graphicx}
\begin{table}[!ht]
\renewcommand{\arraystretch}{1.5}
\centering
\caption{Parameters for high-dimensional simulations using MADELON (s.v. stands for singular value).}
\label{tab:datasets}
\begin{center}
\begin{sc}
% \resizebox{\columnwidth}{!}{%
\begin{tabular}{@{}lrrrr@{}}
\toprule
samples & features & clusters & max s.v. & min s.v. \\ \midrule
100 & 1000 & 2 & 112 & 3e-14 \\
100 & 2000 & 2 & 120 & 7e-14 \\
100 & 3000 & 2 & 126 & 4e-14 \\
100 & 5000 & 2 & 139 & 5e-14 \\
100 & 10000 & 2 & 154 & 7e-14 \\
\bottomrule
\end{tabular}%
% }
\end{sc}
\end{center}
\vskip -0.1in
\end{table}

\subsection{Software}
All the code for the experiments has been implemented in Python 3, relying upon open-source libraries \cite{abadi2016tensorflow,pedregosa2011scikit}.
All the experiments have been run on the same machine: Intel\textsuperscript{\textregistered} Core\texttrademark\ i7-8750H 6-Core Processor at 2.20 GHz equipped with 8 GiB RAM.

To enable code reuse, the Python code for the mathematical models including parameter values and documentation is freely available under Apache 2.0 Public License from a GitHub repository\footnote{\url{https://github.com/pietrobarbiero/cola}}\cite{barbiero_cola}. The whole package can also be downloaded directly from PyPI\footnote{\url{https://pypi.org/project/deeptl/1.0.0/}}.
Unless required by applicable law or agreed to in writing, software is distributed on an "as is" basis, without
warranties or conditions of any kind, either express or implied.

\end{document}